\documentclass[11pt]{article}

\usepackage[margin=1in]{geometry}

\usepackage{microtype}
\usepackage{graphicx}

\newcommand{\archivebib}{references}
\IfFileExists{arxiv_camera_ready/main.tex}{%
  \renewcommand{\archivebib}{arxiv_camera_ready/references}%
  \graphicspath{{arxiv_camera_ready/}{./}}%
}{%
  \graphicspath{{./}}%
}

\usepackage{subcaption}
\usepackage{natbib}
\usepackage{url}
\usepackage{algorithm}
\usepackage{algorithmic}
\usepackage{float}
\usepackage{booktabs} 
\usepackage{multirow} 
\usepackage{placeins} 

\usepackage{hyperref}

\usepackage{amsmath}
\usepackage{amssymb}
\usepackage{mathtools}
\usepackage{amsthm}
\usepackage{bm}
\usepackage{xspace}
\usepackage{enumitem}
\usepackage{xcolor}
\usepackage{tikz}
\pgfdeclarelayer{panelbg}
\pgfdeclarelayer{flowbg}
\pgfsetlayers{panelbg,flowbg,main}
                                                                                              
\definecolor{rfm}{RGB}{180,80,0}                                                                      
\definecolor{gauss}{RGB}{0,100,180}    

\definecolor{cfg}{RGB}{0,150,80}                                                                                                                          
\definecolor{cmt}{gray}{0.5}    

\usepackage[capitalize,noabbrev]{cleveref}

\theoremstyle{plain}
\newtheorem{theorem}{Theorem}[section]
\newtheorem{proposition}[theorem]{Proposition}

\theoremstyle{definition}

\theoremstyle{remark}

\crefname{proposition}{Proposition}{Propositions}
\Crefname{proposition}{Proposition}{Propositions}

\newcommand{\x}{\bm{x}}

\newcommand{\eps}{\bm{\epsilon}}

\newcommand{\E}{\mathbb{E}}

\newcommand{\cN}{\mathcal{N}}

\newcommand{\method}{Noise-Aligned RFM Steering\xspace}
\newcommand{\methodshort}{\textsc{NA-RFM}\xspace}
\newcommand{\denselist}{\itemsep 0pt\parsep=1pt\partopsep 0pt}

\newcommand{\appref}[1]{Appendix~\ref{#1}}

\begin{document}

\title{General and Efficient Steering of Diffusion Models}

\author{
    Qingsong Wang \quad Mikhail Belkin \quad Yusu Wang\\[6pt]
    Hal{\i}c{\i}o\u{g}lu Data Science Institute, University of California San Diego, La Jolla, CA, USA\\[4pt]
    \texttt{qswang92@gmail.com} \quad \texttt{mbelkin@ucsd.edu} \quad \texttt{yusuwang@ucsd.edu}
}

\date{}
\maketitle

\begin{abstract}
Steering diffusion models toward conditions unseen during training typically requires either retraining with conditional inputs or per-step gradient computations, both of which incur substantial computational overhead.
We present \method (\methodshort), a general recipe for efficiently steering unconditional diffusion models without gradient guidance during inference, enabling fast controllable generation.
The method combines two offline-computed signals: \emph{noise alignment}, a high-noise correction from PCA statistics of the target examples and the full data, and \emph{Recursive Feature Machine (RFM) activation steering}, which learns a target-discriminative direction from labeled forward-process activations.
During sampling, noise alignment provides coarse control at high noise, while the RFM direction is reused over intermediate/late timesteps through lightweight activation edits.
Experiments on CIFAR-10, ImageNet, CelebA, and fine-grained bird species show improved target accuracy over gradient-based post-hoc guidance baselines, improved FID on the class-guidance benchmarks, and substantial inference speedups.
{Code:} \url{https://github.com/isotrivial/na-rfm}.
\end{abstract}

\section{Introduction}
\label{sec:intro}

Diffusion models~\citep{ho2020denoising,song2021scorebased} have become a dominant methodology for high-quality image synthesis.
With classifier-free guidance (CFG)~\citep{ho2022classifierfree}, they can be guided effectively when the desired condition is built into training, as in class- or text-conditioned generation.
Many applications instead require steering toward a condition that was unseen or unavailable during diffusion-model training: a new object class, fine-grained species, or visual property specified only through examples.
The key challenge is to steer a pretrained diffusion model toward such unseen conditions without retraining the model.

Popular approaches for steering unconditional models use classifier gradients to guide generation toward desired concepts.
This includes both \emph{training-based} noise-conditioned classifier guidance~\citep{dhariwal2021diffusion,song2021scorebased}, which trains classifiers to predict labels from noisy images at different timesteps, and \emph{training-free} gradient methods that use pre-existing off-the-shelf classifiers~\citep{mpgd,lgd,chung2023diffusion,bansal2023universal,yu2023freedom,ye2024tfg}.
Both noise-conditioned classifier guidance and training-free gradient guidance require per-step gradient computation, and many variants backpropagate through the diffusion model during inference.

In this paper, we develop \method (\methodshort), a post-hoc method for steering pretrained diffusion models without inference-time gradients.
Similar to CFG, \methodshort is gradient-free during sampling, but its guidance signals are constructed after training the diffusion model.
A lightweight offline stage uses target-vs-background examples to compute PCA statistics and learn activation-space steering directions via Recursive Feature Machines (RFMs)~\citep{radhakrishnan2024mechanism}.
The online sampler then uses only diffusion-model forward passes, matrix-vector products, and activation-vector additions, yielding speedups over training-free gradient-based methods.

Two observations guide how we steer at different noise levels along the sampling trajectory:
\begin{enumerate}[label=(\arabic*),leftmargin=*]\denselist
    \item \textbf{Observation 1: high-noise coarse class structure.}
    At high noise, reverse trajectories already contain information predictive of the class ultimately generated.
    Gaussian/PCA analyses provide a tractable approximation to the diffusion model in this regime~\citep{wang2024the,li2024hidden_gaussian,li2026towards,dodson2026two}.
    This motivates \emph{noise alignment}: a high-noise pixel-space alignment signal
    computed from class-conditional and full-data PCA statistics.
    \item \textbf{Observation 2: transferable activation directions.}
    At moderate and low noise levels, forward-process model activations
    provide a strong discriminative signal, and the corresponding activation
    direction remains aligned across forward-noised timesteps that match an
    intermediate/late reverse-sampling window.
    This motivates RFM activation steering: we learn a target-vs-rest Recursive
    Feature Machine (RFM)~\citep{radhakrishnan2024mechanism} direction from
    forward-process activations, then \emph{reuse} it over that sampling window.
\end{enumerate}
The probing experiments in \cref{subsec:overview} motivate these observations.
Existing high-noise Gaussian/PCA analyses support the first; for the second,
we give a theoretical understanding of the stability of discriminative directions in the forward process through a simplified model.

Unconditional diffusion models provide a clean testbed for evaluation, as every target concept is unavailable as a training-time condition.
On CIFAR-10 (\cref{tab:cifar10_main}), \methodshort achieves 96.6\% guidance accuracy compared to 77.1\% for the state-of-the-art training-free gradient baseline TFG~\citep{ye2024tfg}, and also outperforms noise-conditioned classifier guidance (86.0\%), while delivering strong image quality (FID 41.4 vs.\ 73.9 vs.\ 41.9) and a $16\times$ inference speedup over TFG.
The accuracy gains persist on ImageNet $256\times256$, multi-attribute CelebA guidance, and out-of-distribution fine-grained bird species (\cref{sec:experiments}).
Further experiments show \methodshort extends to transformer-based latent diffusion (SiT-XL/2~\citep{ma2024sit}; \appref{app:dit_sit}) and to Stable Diffusion~1.5~\citep{rombach2022high} for depth-of-field control, a photographic property that can be difficult to specify reliably with text prompts alone~\citep{fortes2025bokeh} (\appref{app:sd_dof}).

In summary, \methodshort combines high-noise noise alignment with intermediate/late RFM activation steering to provide post-hoc steering of pretrained diffusion models with gradient-free online inference.

\section{Background on Diffusion Models and Guidance}
\label{sec:background}

Diffusion models~\citep{ho2020denoising,song2021scorebased} are trained by learning to denoise noisy inputs at various timesteps; generation then samples through the learned denoising process. The \textbf{forward (noising) process} linearly mixes clean data $\x_0$ with Gaussian noise over timesteps $t=0,\ldots,T$:
\begin{equation}
    \x_t = \alpha_t \x_0 + \beta_t \bm{\epsilon}, \quad \bm{\epsilon} \sim \cN(\bm{0}, \bm{I}),
    \label{eq:forward_flow}
\end{equation}
where $\alpha_t$ and $\beta_t$ are the signal and noise coefficients, with $\alpha_0 = 1$, $\beta_0 = 0$ (clean data) and $\alpha_T \approx 0$, $\beta_T \approx 1$ (pure noise). We define the \emph{noise-to-signal ratio}
\begin{equation}
    \sigma_t := \frac{\beta_t}{\alpha_t},
    \qquad
    \tilde{\x}_t := \frac{\x_t}{\alpha_t} = \x_0 + \sigma_t\bm{\epsilon}.
    \label{eq:noise_to_signal_ratio}
\end{equation}
Equivalently, after dividing by the signal coefficient $\alpha_t$, $\sigma_t$ is the standard deviation of the additive noise in $\tilde{\x}_t$.
We use this $\sigma_t$ as the reporting convention for guidance windows.
Most main U-Net experiments, including CIFAR-10 and ADM ImageNet/Birds, use a VP/DDPM parameterization with $T=1000$ training timesteps and a linear variance schedule; detailed conversions are given in \appref{app:noise_level_reporting}.

A key quantity for guidance methods is the \textbf{denoised estimate}, i.e., the denoiser output after converting the predicted noise to an estimated clean image:
\begin{equation}
    \hat{\x}_0^{(t)} = \frac{\x_t - \beta_t \, \eps_\theta(\x_t, t)}{\alpha_t},
    \label{eq:denoised_estimate}
\end{equation}
This maps the current noisy state $\x_t$ to the model's prediction of the clean image $\x_0$, and therefore gives a direct estimate of the final output at any intermediate timestep.
For the main experiments in this paper, we sample with DDIM~\citep{song2021ddim}. The DDIM update with stochasticity level $\eta$ is:
{
    \begin{equation}
    \x_{t-1} = \alpha_{t-1}\hat{\x}_0^{(t)} + \sqrt{\beta_{t-1}^2 - \gamma_t^2}\,\eps_\theta(\x_t, t) + \gamma_t \bm{z},
    \label{eq:ddim_update}
\end{equation}}
where $\gamma_t = \eta \cdot \frac{\beta_{t-1}}{\beta_t}\sqrt{1-\frac{\alpha_t^2}{\alpha_{t-1}^2}}$, and $\bm{z}\sim\cN(\bm{0},\bm{I})$. Setting $\eta=0$ recovers the deterministic ODE sampler.

\noindent\textbf{Remark.}
For a fixed noise schedule, noise prediction, denoised estimate prediction, velocity prediction, and score prediction are related by simple transformations.
We write the main equations in DDPM/DDIM notation because this is the parameterization used by our main U-Net experiments.

\textbf{Classifier Guidance}~\citep{dhariwal2021diffusion} steers generation by modifying the noise prediction using gradients from a noise-conditioned classifier $p_\phi(y | \x_t)$ trained on noisy images at all timesteps:
\begin{equation}
    \tilde{\eps}_\theta(\x_t, t, y) = \eps_\theta(\x_t, t) - \beta_t \cdot w \nabla_{\x_t} \log p_\phi(y | \x_t).
    \label{eq:classifier_guidance}
\end{equation}
The gradient $\nabla_{\x_t} \log p_\phi(y | \x_t)$ gives a direction in pixel space that increases the probability of class $y$, applied
to the noise prediction during sampling.
This approach requires training noise-conditioned classifiers for all timesteps and backpropagating through the classifier at \textit{every} denoising step during inference.

\textbf{Classifier-Free Guidance (CFG)}~\citep{ho2022classifierfree} eliminates the need for an auxiliary classifier by training a conditional model $\eps_\theta(\x_t, t, c)$ with random condition dropout. At inference, guidance is achieved by modifying the noise prediction through interpolation:
\begin{equation}
    \tilde{\eps}_\theta = \eps_\theta(\x_t, t, \varnothing) + w \cdot \big(\eps_\theta(\x_t, t, c) - \eps_\theta(\x_t, t, \varnothing)\big).
    \label{eq:cfg}
\end{equation}
While elegant and widely used, this requires conditioning at training time, providing limited post-hoc controllability for attributes not seen during training.

\textbf{Training-Free Gradient-based Guidance.}
To enable post-hoc control without retraining, several methods use off-the-shelf classifiers trained on clean images and backpropagate guidance from either $\x_t$ or the denoised estimate $\hat{\x}_0^{(t)}$ at each step~\citep{bansal2023universal,yu2023freedom,ye2024tfg}. Representative lines include inverse-problem solvers such as DPS, LGD~\citep{chung2023diffusion,lgd}, iterative refinement strategies like FreeDoM, and variants that guide through $\hat{\x}_0^{(t)}$ or add backward optimization (e.g., MPGD, UGD)~\citep{yu2023freedom,mpgd,bansal2023universal,ye2024tfg}.
These methods often face weak or misaligned classifier gradients, especially at high noise, sometimes addressed with extra refinement steps; their per-step backpropagation also makes sampling substantially slower than unconditional generation.

\section{Method: Noise-Aligned RFM Steering}
\label{sec:method}

\methodshort separates \emph{offline construction} from
\emph{gradient-free online sampling}.
\emph{Offline}, for each target concept $c$ such as a class or
example-defined visual property, we use examples to
compute the PCA statistics for high-noise guidance and learn an RFM
direction $\bm{v}_c$ from low-noise forward-process activations in
a selected network block.
\emph{During sampling}, the PCA statistics give a pixel-space correction at
high noise, and the RFM direction is applied to the selected block
over the intermediate/late sampling window.
\cref{fig:method_overview} summarizes the pipeline.
We next motivate the design choices and give the implementation details.

\begin{figure}[t]
    \centering
    \resizebox{0.92\linewidth}{!}{%
    \begin{tikzpicture}[
        databox/.style={rectangle, draw=black!60, fill=white, minimum width=1.6cm, minimum height=0.6cm, align=center, font=\footnotesize},
        gaussbox/.style={rectangle, rounded corners=2pt, draw=cyan!70!black, fill=cyan!12, minimum width=1.5cm, minimum height=0.6cm, align=center, font=\footnotesize},
        rfmbox/.style={rectangle, rounded corners=2pt, draw=orange!70!black, fill=orange!12, minimum width=1.5cm, minimum height=0.6cm, align=center, font=\footnotesize},
        outputbox/.style={rectangle, rounded corners=2pt, draw=black!60, fill=gray!8, minimum width=1.5cm, minimum height=0.6cm, align=center, font=\footnotesize},
        arrow/.style={->, >=stealth, thick, black!70},
        phasetitle/.style={font=\small\bfseries, text=black!80},
        timeline/.style={draw=black!40, thick, ->, >=stealth},
        dottedarrow/.style={->, >=stealth, loosely dotted, line width=1.2pt}
    ]

    \begin{pgfonlayer}{panelbg}
    \fill[gray!6, rounded corners=4pt] (-0.3, 0.9) rectangle (10.8, 3.3);
    \end{pgfonlayer}
    \node[phasetitle] at (0.6, 3.05) {Offline};

    \node[databox] (data) at (1.5, 2.0) {Labeled\\images};

    \node[gaussbox] (pca) at (4.3, 2.8) {PCA};
    \node[gaussbox] (stats) at (7.0, 2.8) {$\{\bm{\mu}_c, V_c\}$, $\{\bm{\mu}_{\text{all}}, V_{\text{all}}\}$};
    \node[font=\scriptsize, text=cyan!60!black, fill=gray!6, inner sep=1.2pt] at (7.0, 2.3) {class + full statistics};

    \node[rfmbox] (forward) at (4.3, 1.2) {Forward noising};
    \node[rfmbox] (rfm) at (6.5, 1.2) {RFM};
    \node[rfmbox] (dir) at (8.6, 1.2) {$\bm{v}_c$};
    \node[font=\scriptsize, text=orange!60!black, fill=gray!6, inner sep=1.2pt] at (8.6, 1.72) {steering direction};

    \draw[arrow] (data.east) -- ++(0.3,0) |- (pca.west);
    \draw[arrow] (data.east) -- ++(0.3,0) |- (forward.west);
    \draw[arrow] (pca) -- (stats);
    \draw[arrow] (forward) -- (rfm);
    \draw[arrow] (rfm) -- (dir);

    \begin{pgfonlayer}{panelbg}
    \fill[gray!6, rounded corners=4pt] (-0.3, -1.6) rectangle (10.8, 0.7);
    \end{pgfonlayer}
    \node[phasetitle] at (0.6, 0.45) {Inference};

    \draw[timeline] (1.2, -0.3) -- (10.2, -0.3);
    \node[font=\scriptsize, text=black!50] at (1.2, -0.55) {$\sigma_T$};
    \node[font=\scriptsize, text=black!50] at (10.2, -0.55) {$\sigma_0$};
    \node[font=\scriptsize, text=black!40] at (5.7, -0.55) {noise level $\sigma_t$};

    \fill[cyan!18, rounded corners=2pt] (1.5, -0.18) rectangle (4.3, 0.18);
    \fill[orange!18, rounded corners=2pt] (5.0, -0.18) rectangle (8.72, 0.18);

    \node[font=\scriptsize\bfseries, text=cyan!70!black] at (2.9, 0.0) {Noise alignment};
    \node[font=\scriptsize\bfseries, text=orange!70!black] at (6.75, 0.0) {RFM steering};

    \node[font=\scriptsize, text=cyan!60!black, fill=gray!6, inner sep=1.2pt] at (2.9, 0.38) {high noise};
    \node[font=\scriptsize, text=orange!60!black, fill=gray!6, inner sep=1.2pt] at (6.75, 0.38) {intermediate/late};

    \node[databox] (noise) at (1.2, -1.1) {$\x_T \sim \mathcal{N}$};
    \node[outputbox] (output) at (10.0, -1.1) {Generated\\image};

    \draw[arrow, very thick, black!50] (noise.east) -- node[above, font=\scriptsize, text=black!50] {denoising trajectory} (output.west);

    \begin{pgfonlayer}{flowbg}
    \draw[dottedarrow, cyan!70!black]
        (stats.south) -- ++(0, -0.55) -- (4.05, 1.95) -- (4.05, 0.24);
    \draw[dottedarrow, orange!70!black]
        (dir.south) -- (8.6, 0.24);
    \end{pgfonlayer}

    \end{tikzpicture}%
    }
    \caption{\textbf{Overview of \method.} Offline, we compute class-conditional PCA statistics (cyan) and RFM directions from forward-process activations (orange). At inference, noise alignment supplies the high-noise correction and RFM steering supplies intermediate/late activation-space control, with no classifier gradients.}
    \label{fig:method_overview}
\end{figure}

\subsection{Empirical Observations}
\label{subsec:overview}

We use the pretrained CIFAR-10 diffusion U-Net as a controlled setting for
identifying which U-Net activations carry class-discriminative information at
which noise levels.
For a fixed U-Net block and noise level, we train 10-way linear probes on
10{,}000 activation examples and report accuracy on a held-out 20\% split,
comparing two activation collections with different label sources:
\begin{enumerate}[label=(\alph*),leftmargin=*]\denselist
    \item \textbf{Reverse-trajectory activation probe.}
    We run the diffusion model, record U-Net activations along reverse ODE sampling
    trajectories, and label each recorded activation by the \emph{final generated
    class} assigned to its terminal image by the external evaluation classifier.
    This probe asks whether activations at a noisy reverse step already predict
    the final generated class.
    \item \textbf{Forward-process activation probe.}
    We take labeled CIFAR-10 images, corrupt each image to the same noise
    levels using \cref{eq:forward_flow}, record activations from the same U-Net
    blocks, and use the clean-image class as the probe label.
    This probe asks at which noise levels labeled examples give separable
    activation features for offline direction learning.
\end{enumerate}
These probes separate two possible sources for learning an activation-space
steering direction $\bm{v}_c$ with a lightweight RFM.
Reverse trajectories show activations visited by the sampler, but using them for
direction learning would require generating and labeling trajectories for each
target concept, which is costly when the target is rare or hard to generate.
Forward-process activations are cheaper: one corruption and one denoiser pass
per labeled image.
The probes below test where these examples provide separable activation
features, which determines where we learn $\bm{v}_c$.
We report representative deeper blocks, including the $4\times4$ and $8\times8$
blocks where the class signal is strongest.

\begin{figure}[t]
    \centering
    \includegraphics[width=0.72\linewidth]{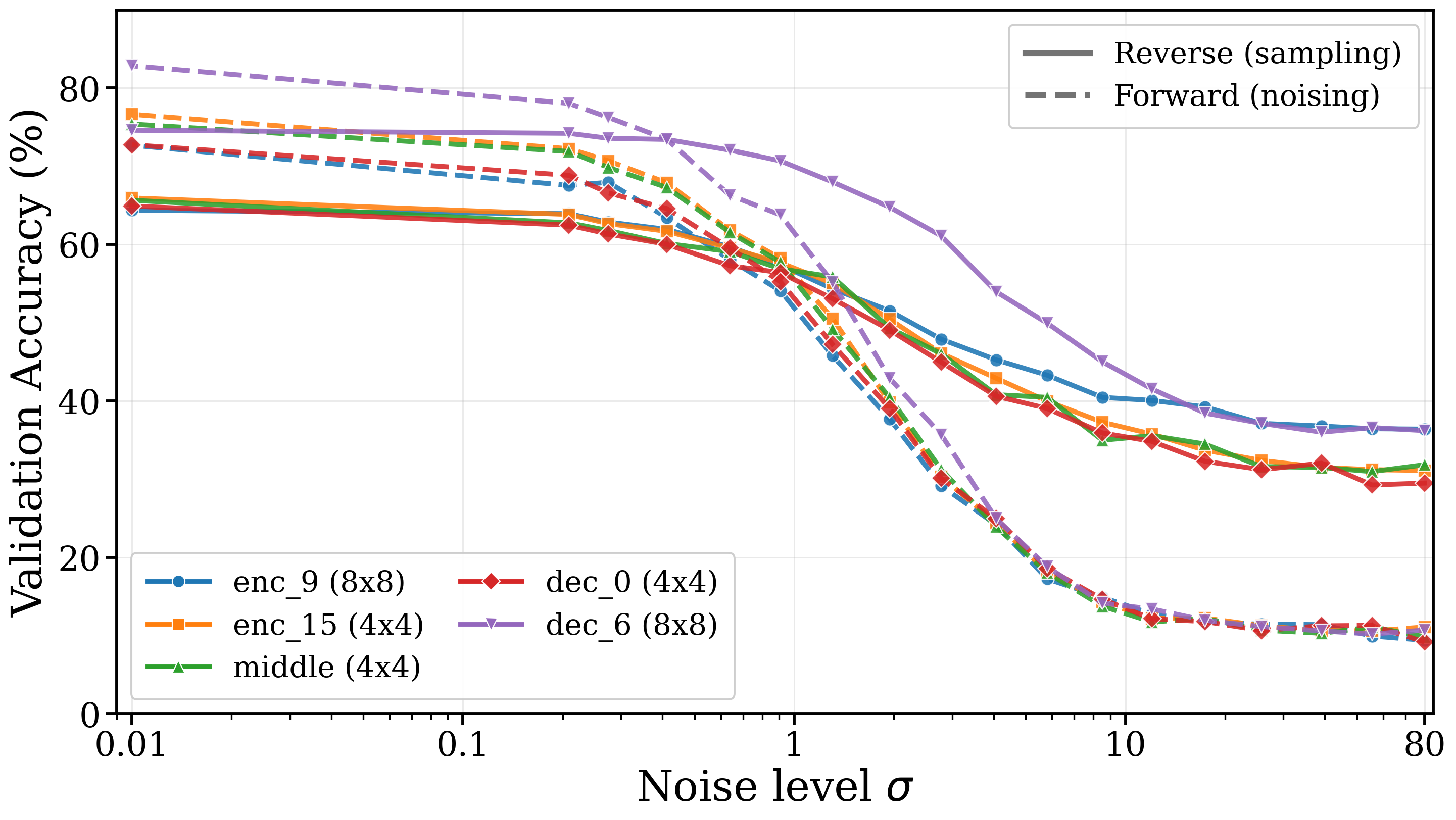}
    \caption{\textbf{Linear probe accuracy on forward vs.\ reverse diffusion activations.}
    Solid curves use activations recorded along unconditional reverse sampling trajectories and labeled by the final generated class assigned by the external evaluation classifier. Dashed curves use activations from labeled training images corrupted to the corresponding noise level and labeled by the image label. We report five representative U-Net blocks.}
    \label{fig:probing_combined}
\end{figure}

\noindent\textbf{Observation 1: high-noise coarse class structure.}
The reverse-trajectory probe in \cref{fig:probing_combined} shows that
activations remain class-informative even at high noise.
Thus, high-noise reverse activations already contain coarse information about the final generated class.
At the same high noise levels, forward-process activations are near chance for
class probing, so we do not learn RFM directions from them.
For high noise, we therefore impose guidance by mimicking CFG with a Gaussian/PCA approximation of the diffusion model: class-conditional and full-data PCA denoisers provide the pixel-space correction used by noise alignment in \cref{subsec:earlystage}.

\noindent\textbf{Observation 2: transferable activation directions.}
As the noise level decreases, forward-process activations become strongly
class-discriminative, exceeding 80\% probe accuracy near the end of the
trajectory in \cref{fig:probing_combined}.
These moderate/low-noise forward activations are the labeled activation source
used later for RFM direction learning.
The practical question is whether a direction learned once, at a
reference noise level, can be reused over later sampling noise levels. We test this
by comparing the cosine similarity of activation directions collected from the
forward process at different noise levels.
\begin{figure}[t]
    \centering
    \includegraphics[width=0.72\linewidth]{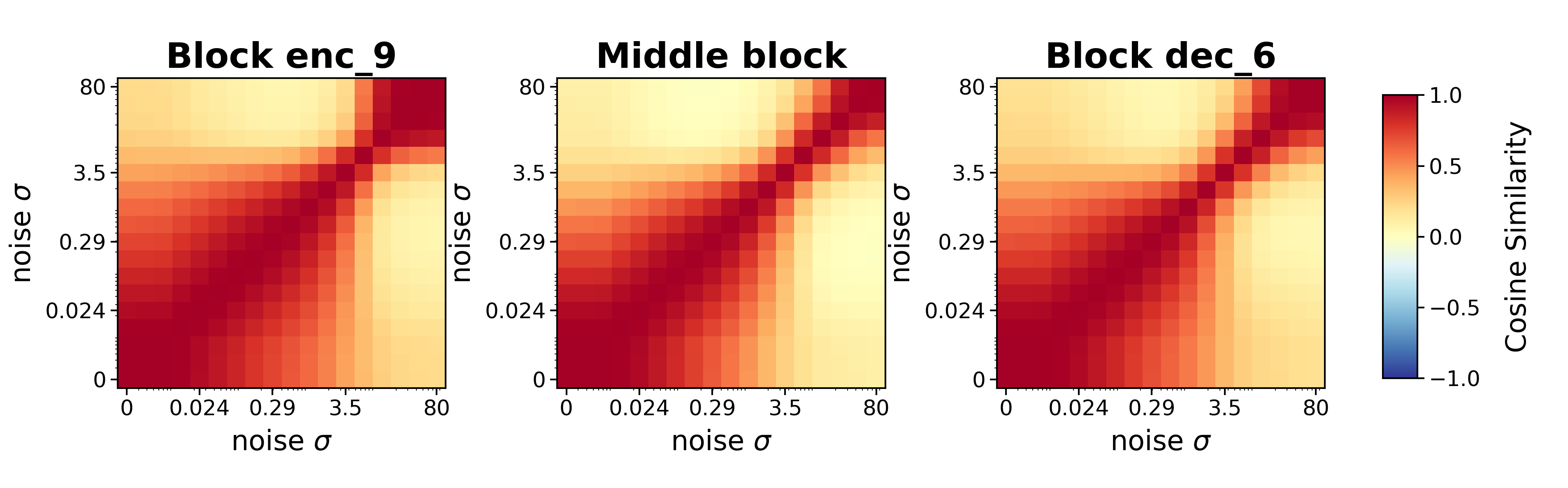}
    \caption{\textbf{Temporal transfer of activation directions.}
    Each entry reports the cosine similarity between activation directions collected from the forward process at two noise levels. Both axes are ordered by noise level $\sigma_t$, from low noise to high noise. All entries are non-negative, and the low noise levels activations are more aligned with intermediate noise levels than with high noise levels across the blocks.}
    \label{fig:temporal_transfer}
\end{figure}
\cref{fig:temporal_transfer} shows that these activation directions
remain aligned across the intermediate/late window across blocks while the low noise levels are poorly aligned with high noise level, matching the weak
high-noise forward probes in \cref{fig:probing_combined}. This motivates
learning an RFM steering direction offline from low-noise forward activations
and reuse it over the intermediate noise window. Proposition~\ref{prop:shared_cov_transfer}
gives a simple model in which a class-discriminative direction is robust to
noising.
The sampler below therefore uses noise alignment at high noise and RFM activation steering over the intermediate/late range.

\subsection{Noise Alignment in the High-Noise Window}
\label{subsec:earlystage}
Noise alignment is the high-noise update suggested by Observation~1.
Observation~1 shows that coarse class information already appears at high noise.

Although reverse-trajectory activations are informative in this regime, using
them for target-specific direction learning would require generating trajectories
for each target concept. This is especially costly when the target is rare or
hard to sample from the unguided model.
In the same high-noise regime, linear-Gaussian/PCA approximations give a
tractable model of the denoising map: fine details are suppressed, while
class-level means and leading principal directions can still supply a coarse
steering signal
\citep{wang2024the,li2024hidden_gaussian,li2026towards,dodson2026two}.
We therefore use the difference between a class-conditional PCA denoiser and an
unconditional PCA denoiser as a pixel-space guidance signal.
Let $\tilde{\x}_t=\x_t/\alpha_t=\x_0+\sigma_t\bm{\epsilon}$, where
$\sigma_t=\beta_t/\alpha_t$ is the noise-to-signal ratio from
\cref{eq:noise_to_signal_ratio}.
Following~\citet{li2024hidden_gaussian,li2026towards}, the PCA
denoiser of the subclass data at $\tilde{\x}_t$ is given by:
\begin{equation}
D_c(\tilde{\x}_t; \sigma_t) = \bm{\mu}_c + V_c \text{diag}\left(\frac{\nu_{cj}}{\nu_{cj} + \sigma_t^2}\right) V_c^\top (\tilde{\x}_t - \bm{\mu}_c),
\end{equation}
where $\bm{\mu}_c$ is the class mean, columns of $V_c$ are the retained
principal directions, and $\{\nu_{cj}\}$ are the corresponding covariance
eigenvalues. Similarly, we compute an unconditional denoiser
$D_{\text{all}}(\tilde{\x}_t; \sigma_t)$ using full-dataset statistics
$(\bm{\mu}_{\text{all}}, V_{\text{all}}, \{\nu_{\text{all},j}\})$.

The guidance signal is the difference between conditional and unconditional denoisers:
\begin{equation}
\bm{g}_t^{\text{PCA}} = D_c(\tilde{\x}_t; \sigma_t) - D_{\text{all}}(\tilde{\x}_t; \sigma_t).
\label{eq:pca_guidance_main}
\end{equation}

The update uses only precomputed class and full-data statistics, and targets the
high-noise regime where the forward activations fail to provide useful signals.

\subsection{RFM Direction Discovery and Activation Steering}
\label{sec:direction_discovery}
RFM steering provides the second, intermediate/late, guidance mechanism in
\methodshort. Guided by Observation~2, we use the temporal stability
of forward-process activation directions to learn a target direction offline and
reuse it during sampling, rather than learning a separate direction at each
timestep. This choice also avoids target-specific reverse trajectories:
moderate/low-noise forward activations are already
class-discriminative (\cref{fig:probing_combined}), and the
activation direction remains aligned over the intermediate/late sampling window
(\cref{fig:temporal_transfer}). We use Recursive Feature Machines
(RFMs)~\citep{radhakrishnan2024mechanism} for this direction-learning step
because they learn task-adapted feature metrics from limited examples in
high-dimensional activation spaces, while the diffusion model itself remains
fixed.

\paragraph{Offline Activation Collection.}
Fix a low-noise reference timestep $t_{\mathrm R}$ and a U-Net block $\ell$.
We typically use the last encoder block before the bottleneck; block-selection
ablations are reported in \appref{app:block_selection}. Given labeled images
$\{(\x_i, y_i)\}_{i=1}^N$, draw
$\bm{\epsilon}_i\sim\mathcal{N}(\bm{0},\bm{I})$ and collect activations by
forward noising each image to $t_{\mathrm R}$:
\begin{equation}
    \begin{aligned}
    \x_{t_{\mathrm R}}^{(i)}
    &= \alpha_{t_{\mathrm R}}\x_i + \beta_{t_{\mathrm R}}\bm{\epsilon}_i,\\
    \bm{h}_i
    &= \operatorname{vec}\!\left(\phi_\ell(\x_{t_{\mathrm R}}^{(i)}, t_{\mathrm R})\right)
    \in\mathbb{R}^{d_h},
    \end{aligned}
    \label{eq:activation_collection}
\end{equation}
where $\phi_\ell$ extracts the block-$\ell$ activation tensor
$\bm{H}_i^{(\ell)}$, and $\bm{h}_i$ is its flattened form in
dimension $d_h=C_\ell H_\ell W_\ell$.
The offline stage outputs one unit direction $\bm{v}_c$ for each target class;
when used inside the U-Net, $\bm{v}_c$ is reshaped back to the tensor shape of
block $\ell$.
For target class $c$, we use binary labels
$z_i^c=+1$ if $y_i=c$ and $z_i^c=-1$ otherwise.

\paragraph{RFM Training.}
Given flattened activations $\{\bm{h}_i\}$ and binary labels
$\{z_i^c\}$, we train one RFM model for class $c$ versus the remaining
classes or background data.
Following the RFM feature-learning mechanism of
\citet{radhakrishnan2024mechanism}, the RFM model maintains a Mahalanobis
feature metric $M$ over activation space; directions with larger weight under
$M$ are treated as more important features.
We update this metric iteratively.
Starting from $M^{(0)}=I$, iteration $r$ builds a Laplacian kernel
\[
    \begin{aligned}
    K_{ij}^{(r)}&=\exp(-\gamma\,d_{M^{(r)}}(\bm{h}_i,\bm{h}_j)),\\
    d_{M^{(r)}}(\bm{h}_i,\bm{h}_j)
    &=\|(M^{(r)})^{1/2}(\bm{h}_i-\bm{h}_j)\|_2,
    \end{aligned}
\]
solves kernel ridge regression to obtain a predictor $f^{(r)}$, and sets the
next metric to its average gradient outer product (AGOP):
\[
    M^{(r+1)}=M_{\mathrm{AGOP}}^{(r)}
    =
    \frac{1}{N}\sum_i
    \nabla_{\bm{h}} f^{(r)}(\bm{h}_i)
    \nabla_{\bm{h}} f^{(r)}(\bm{h}_i)^\top .
\]
Thus each predictor defines the metric used by the next predictor,
progressively emphasizing activation directions that separate the target from
the rest. In our setting, directly forming this matrix can be prohibitively
expensive because $d_h=C_\ell H_\ell W_\ell$ is large while $N\ll d_h$.
At iteration $r$, we stack the activation gradients as rows of
$G^{(r)}\in\mathbb{R}^{N\times d_h}$ and compute the smaller
$N\times N$ matrix $G^{(r)}G^{(r)\top}/N$, using the same sample-space principle
as eigenfaces/PCA~\citep{turk1991eigenfaces}. Its nonzero eigenspace determines
the corresponding nonzero eigenspace of $G^{(r)\top}G^{(r)}/N$, which is the
part of the AGOP used for the next RFM metric update and for the final steering
direction.

\paragraph{Forming the Steering Direction.}
After the validation-selected RFM iteration, stack the activation gradients as
rows of $G\in\mathbb{R}^{N\times d_h}$ and let
$A=G^\top G/N$ be the final activation-space AGOP. We obtain the leading
eigenpairs $(\rho_j,\bm{u}_j)$ of $A$ by the sample-space computation above;
\appref{app:sample_space_computation} gives the algebra. The sign of each eigenvector is
arbitrary, so following
\citet{beaglehole2026toward}, we compute the Pearson correlation between
each eigenvector's projection scores and the target-vs-rest labels. Let
$s_{c,j}\in\{\pm1\}$ be the sign that makes this correlation nonnegative for
eigenvector $j$. The class direction is the eigenvalue-weighted,
sign-corrected top-$k$ combination:
\[
    \tilde{\bm{v}}_c=\sum_{j=1}^k
    \frac{\rho_j}{\sum_{r=1}^k\rho_r}s_{c,j}\bm{u}_j,
    \qquad
    \bm{v}_c=\tilde{\bm{v}}_c/\|\tilde{\bm{v}}_c\|_2,
\]
with $k\in\{1,3,5\}$.
This unit vector is the target-specific steering direction used in the
activation update below.

\paragraph{Online Activation Steering.} During sampling, the
learned direction is fixed; the only online operation is to reshape it to the
selected block and add it to that block's activation.
Let $\bm{V}_c^{(\ell)}$ denote $\bm{v}_c$ reshaped to the tensor shape of
block $\ell$.
When RFM steering is active, we run a denoiser pass in which the activation
tensor at block $\ell$ is replaced by
\begin{equation}
    \bm{H}^{(\ell)}_{\mathrm{steered}}
    =
    \bm{H}^{(\ell)}
    + w_{\text{RFM}}\|\bm{H}^{(\ell)}\|_F \bm{V}_c^{(\ell)},
    \label{eq:steering}
\end{equation}
where $w_{\text{RFM}}$ is the steering strength and
$\|\bm{V}_c^{(\ell)}\|_F=\|\bm{v}_c\|_2=1$.
Here $\|\cdot\|_F$ denotes the Frobenius norm. The activation-norm factor keeps
the guidance magnitude proportional to the current activation scale.
An amplification step analogous to CFG further improves RFM steering.
Specifically, the steered pass produces a noise prediction
$\hat{\bm{\epsilon}}_{\mathrm{rfm}}$ and denoised estimate
$\hat{\x}_{0,\mathrm{rfm}}$. Let $\hat{\x}_{0,\mathrm{base}}$ be the
unsteered denoised estimate from the same sampling step. We express the
activation edit in clean-image coordinates by extrapolating from the base
estimate toward the steered one:
\begin{equation}
    \hat{\x}_0 \gets \hat{\x}_{0,\mathrm{base}} + s \cdot
    (\hat{\x}_{0,\mathrm{rfm}} - \hat{\x}_{0,\mathrm{base}}),
    \label{eq:rfm_amplification}
\end{equation}
where $s$ is the amplification scale.
When $s=1$, the sampler uses the steered denoiser output directly; larger
values strengthen the activation-space edit.

\subsection{Inference}
\label{subsec:inference}
At inference time, \methodshort imposes guidance without gradient computation.
Algorithm~\ref{alg:sampling} summarizes the resulting sampler. Each sample step
first runs the denoiser once to obtain an unsteered estimate, then applies the
guidance mechanisms whose noise windows are active at the current noise level $\sigma_t$.
When noise alignment is active, the step adds the precomputed PCA correction
$D_c(\tilde{\x}_t;\sigma_t)-D_{\text{all}}(\tilde{\x}_t;\sigma_t)$ directly to
$\hat{\x}_0$.
When RFM steering is active, the step uses one additional denoiser evaluation
with the block-$\ell$ activation replaced by
$\bm{H}^{(\ell)}_{\mathrm{steered}}$, then applies the affine update in
\cref{eq:rfm_amplification}.
No step backpropagates through a classifier or through the diffusion model.

\paragraph{Guidance Windows.}
We schedule both mechanisms using the noise parameter
$\sigma_t$ from \cref{eq:noise_to_signal_ratio}.
The \emph{noise-alignment (NA) window} is active when
$\sigma_t \geq \sigma_{\mathrm{end}}$, the initial high-noise part of the
trajectory where we apply the PCA-based coarse correction.
The \emph{RFM window} is active when
$\sigma_t \in [\sigma_{\mathrm{R}}^{\mathrm{lo}},
\sigma_{\mathrm{R}}^{\mathrm{hi}}]$, the intermediate/low-noise range where the activation direction shows stable alignment.
The dataset-specific ranges are reported in \appref{app:implementation}.

\begin{algorithm}[t]
\caption{Guided sampling with noise alignment and RFM steering. For strided DDIM sampling, $t-1$ denotes the next lower-noise scheduler index.}
\label{alg:sampling}
\begin{algorithmic}[1]
        \REQUIRE PCA denoisers $D_c, D_{\text{all}}$; {\color{rfm}RFM tensor direction $\bm{V}_c^{(\ell)}$ for block $\ell$}
        \REQUIRE {\color{gauss}Noise alignment coefficient $\lambda$; Noise alignment window $\sigma_t \geq \sigma_{\mathrm{end}}$}
        \REQUIRE {\color{rfm}RFM steering coefficient $w_{\text{RFM}}$; RFM steering window $[\sigma_{\mathrm{R}}^{\mathrm{lo}}, \sigma_{\mathrm{R}}^{\mathrm{hi}}]$}; {\color{cfg}RFM amplification scale $s$}
        \STATE $\x_T \sim \mathcal{N}(0, \bm{I})$
        \FOR{$t = T, \ldots, 1$}
        \STATE $(\alpha_t,\beta_t,\sigma_t) \gets$ scheduler coefficients with $\sigma_t=\beta_t/\alpha_t$
        \STATE Run $\eps_\theta(\x_t,t)$ once to obtain $\hat{\bm{\epsilon}}$ and block-$\ell$ activation $\bm{H}^{(\ell)}$
        \STATE $\hat{\x}_{0,\mathrm{base}} \gets (\x_t - \beta_t\hat{\bm{\epsilon}}) / \alpha_t$
        \STATE $\hat{\x}_0 \gets \hat{\x}_{0,\mathrm{base}}$
        \STATE {\color{cmt}$\triangleright$ RFM steering with amplification}
        \IF{$\sigma_t \in [\sigma_{\mathrm{R}}^{\mathrm{lo}}, \sigma_{\mathrm{R}}^{\mathrm{hi}}]$}
        \STATE {\color{rfm}$\begin{aligned}[t]
        \bm{H}^{(\ell)}_{\mathrm{steered}}
        &\gets
        \bm{H}^{(\ell)}
        + w_{\text{RFM}}\|\bm{H}^{(\ell)}\|_F \bm{V}_c^{(\ell)}
        \end{aligned}$}
        \STATE {\color{rfm}Run $\eps_\theta(\x_t,t)$ with the block-$\ell$ activation replaced by $\bm{H}^{(\ell)}_{\mathrm{steered}}$ to obtain $\hat{\bm{\epsilon}}_{\mathrm{rfm}}$}
        \STATE {\color{rfm}$\hat{\x}_{0,\mathrm{rfm}} \gets (\x_t - \beta_t\hat{\bm{\epsilon}}_{\mathrm{rfm}}) / \alpha_t$}
        \STATE {\color{cfg}$\hat{\x}_0 \gets \hat{\x}_{0,\mathrm{base}} + s \cdot (\hat{\x}_{0,\mathrm{rfm}} - \hat{\x}_{0,\mathrm{base}})$}
        \ENDIF
        \STATE {\color{cmt}$\triangleright$ Noise Alignment}
        \IF{$\sigma_t \geq \sigma_{\mathrm{end}}$}
        \STATE {\color{gauss}$\tilde{\x}_t \gets \x_t/\alpha_t$}
        \STATE {\color{gauss}$\bm{g}_t^{\mathrm{PCA}} \gets D_c(\tilde{\x}_t; \sigma_t) - D_{\text{all}}(\tilde{\x}_t; \sigma_t)$}
        \STATE {\color{gauss}$\hat{\x}_0 \gets \hat{\x}_0 + \lambda \bm{g}_t^{\mathrm{PCA}}$}
        \ENDIF
        \STATE $\hat{\bm{\epsilon}} \gets (\x_t - \alpha_t\hat{\x}_0) / \beta_t$
        \STATE $\x_{t-1} \gets \alpha_{t-1}\hat{\x}_0 + \beta_{t-1}\hat{\bm{\epsilon}}$ \COMMENT{DDIM, $\eta{=}0$}
        \ENDFOR
        \STATE \textbf{return} $\x_0$
        \end{algorithmic}
        \end{algorithm}

\noindent\textbf{Computational complexity.}
Offline preparation uses target-specific labeled examples but does not retrain
the diffusion model.
For a set of target classes, the offline work is to compute class and
unconditional PCA statistics, collect one shared forward-process activation
set, and train one RFM direction per target class.
For high-dimensional, low-sample data, both linear-algebra steps use
compact sample-space computations: PCA uses the sample matrix without forming the
image-space covariance, and the RFM AGOP eigenspace is recovered from
$GG^\top/N$ rather than $G^\top G/N$.
Online, a baseline step uses one denoiser pass; a noise-alignment step adds PCA
matrix-vector products; and an RFM-active step adds one steered denoiser pass.
No online step backpropagates through a classifier or through the diffusion
model.
\cref{tab:imagenet_efficiency} reports measured offline preparation time and
online sampling cost for the ImageNet setting.

\section{Related Work}
\label{sec:related_work}

\textbf{Activation Steering in Diffusion Models.}
In diffusion models, the U-Net bottleneck features (``h-space'') have been shown to act as a semantic latent space and support linear, interpretable edits~\citep{kwon2023diffusion}; follow-up work uses h-space feature manipulation for training-free content injection and editing~\citep{jeong2024hspace}. Those methods are mainly developed for input-specific editing, often using DDIM inversion to obtain the latent for a given image. Other text-to-image editing methods manipulate cross-attention maps~\citep{hertz2023prompt,patashnik2023localizing} or invert prompts back into the conditioning space~\citep{mahajan2024prompting}. \methodshort instead learns a class-level direction offline from labeled examples, applies the same direction to each generation trajectory, and avoids inference-time backpropagation through the diffusion network.

\textbf{Semantic Structure and Early Concept Emergence in Diffusion.}
Prior work has shown that semantic structure appears early along the diffusion trajectory. \citet{hertz2023prompt} show that cross-attention maps encode scene layout during the first few denoising steps; \citet{patashnik2023localizing} use this structure to localize shape edits; \citet{tinaz2026emergence} track the emergence and evolution of interpretable concepts along the reverse trajectory; and \citet{li2025enhancing,wang2026seeds} show that the initial noise seed carries high-level compositional cues. Closely related to our method, \citet{meng2024notall} show that intermediate blocks of diffusion U-Nets, evaluated on noisy inputs, can serve as \emph{discriminative feature} extractors for dense prediction tasks, with quality varying across blocks and timesteps. This literature supports probing U-Net activations as semantic features at noisy timesteps in~\cref{fig:probing_combined}. {\cref{fig:temporal_transfer} tests the additional temporal-transfer property used by \methodshort: forward-process activation directions remain aligned across the intermediate/late sampling range.}

\textbf{Steering Vectors in Language Models.}
Related work in large language models steers pretrained generators by adding learned directions to internal activations~\citep{beaglehole2026toward,turner2023steering,zou2023representation}. These directions, extracted from contrasting prompts, linear probes, or RFM, are added at one or more transformer layers to bias generation without gradient computation. \methodshort adapts this idea to diffusion models, where activations are high-dimensional feature maps and the same block is evaluated across many noise levels. These differences motivate the sample-space RFM computation and the temporal-transfer analysis in~\cref{fig:temporal_transfer}.

\begin{table*}[!t]
    \centering
    \caption{\textbf{Overall comparison with gradient-based post-hoc baselines.} We compare \method against TFG~\citep{ye2024tfg} and reported baselines across the four benchmark tasks. Each cell reports \emph{accuracy (\%) / quality metric}; the quality metric is FID$\downarrow$ except for CelebA-HQ, where we report log-KID$\downarrow$. TFG results are from~\citet{ye2024tfg}. TFG-4 is not reported for CelebA-HQ; \cref{tab:celeba_multi} gives stratified TFG accuracies. Bold and underline mark the best and second-best target accuracy, respectively; quality metrics are reported alongside accuracy.}
    \label{tab:main_comparison}
    \resizebox{\textwidth}{!}{%
    \begin{tabular}{@{}lcccccccc@{}}
        \toprule
        \textbf{Task}                           & \textbf{DPS} & \textbf{LGD} & \textbf{FreeDoM} & \textbf{MPGD} & \textbf{UGD} & \textbf{TFG-1}          & \textbf{TFG-4}         & \textbf{\methodshort}       \\
        \midrule
        CIFAR-10 ($\uparrow$, $\downarrow$)     & 50.1 / 172   & 32.2 / 102   & 34.8 / 135       & 38.0 / 88     & 45.9 / 94    & 52.0 / 92               & \underline{77.1} / 73.9  & \textbf{96.6} / 41.4 \\
        ImageNet ($\uparrow$, $\downarrow$)     & 38.8 / 193   & 11.5 / 210   & 19.7 / 200       & 6.8 / 239     & 25.5 / 205   & 40.9 / 176              & \underline{59.8} / 165 & \textbf{75.8} / 98 \\
        Gender+Age ($\uparrow$, $\downarrow$)   & 71.6 / -4.3  & 52.0 / -5.1  & 68.7 / -3.9      & 68.6 / -4.8   & 75.1 / -4.4  & \underline{75.2} / -3.9 & {--}                   & \textbf{96.0} / -2.0        \\
        Gender+Hair ($\uparrow$, $\downarrow$)  & 73.0 / -3.9  & 55.0 / -5.0  & 67.1 / -3.5      & 63.9 / -4.3   & 71.3 / -4.1  & \underline{76.0} / -3.6 & {--}                   & \textbf{83.3} / -2.4        \\
        Fine-grained ($\uparrow$, $\downarrow$) & 0.0 / 348    & 0.5 / 246    & 0.6 / 258        & 0.6 / 249     & 1.1 / 255    & 1.3 / 256               & \underline{2.2} / 259  & \textbf{14.1} / 72 \\
        \bottomrule
    \end{tabular}%
    }
    \vspace{-2mm}
\end{table*}

\section{Experiments}
\label{sec:experiments}
We evaluate \methodshort on four post-hoc steering settings used by
TFG~\citep{ye2024tfg}: CIFAR-10 label guidance, ImageNet
$256\times256$ label guidance, CelebA-HQ multi-attribute guidance, and
fine-grained bird-species guidance using an ImageNet backbone.
In each setting, the target condition is unseen or unavailable as a
training-time condition for the diffusion model, so we compare against
post-hoc guidance baselines.
Our primary baseline is TFG, a recent gradient-based post-hoc guidance method;
where available, we report both TFG-1 and TFG-4, corresponding to
$N_{\text{recur}}=1$ and $N_{\text{recur}}=4$.
Comparing with the training-free methods, our
\methodshort uses target-vs-background examples to construct guidance signals
offline and then samples without guidance-classifier evaluations or
classifier-gradient backpropagation.
\cref{tab:main_comparison} gives the cross-task summary; the following
subsections report dataset-specific results, with qualitative grids and related
diagnostics collected in \appref{app:additional_results}.

Across the four main settings, \methodshort provides stronger target control
than TFG-4.
The gains are largest on CIFAR-10, ImageNet, and fine-grained species guidance,
where target accuracy increases from 77.1\% to 96.6\%, from 59.8\% to 75.8\%,
and from 2.2\% to 14.1\%, respectively, with improved FID in all three cases.
Overall, using example data and lightweight offline preparation gives stronger
target control than guidance from off-the-shelf classifier gradients while avoiding
online gradient computation.

\noindent\textbf{Experimental protocol.}
We use CIFAR-10~\citep{cifar10}, ImageNet~\citep{imagenet15russakovsky}
classes 111, 222, 333, and 444
following~\citet{ye2024tfg}, CelebA-HQ~\citep{karras2018progressive}, and
Birds-525 for fine-grained species guidance.
For CIFAR-10, we use the improved DDPM U-Net~\citep{nichol2021improved}; for
ImageNet and Birds-525, the unconditional ADM model~\citep{dhariwal2021diffusion};
and for CelebA-HQ, a DDPM trained on CelebA-HQ.
All our main U-Net experiments use deterministic DDIM~\citep{song2021ddim}
sampling with 100 steps.
Implementation details, including checkpoints, activation blocks, guidance
strengths, RFM fitting parameters, and per-dataset settings, are in
\appref{app:implementation}.

\noindent\textbf{Labels, evaluators, and metrics.}
For CIFAR-10, ImageNet, and Birds-525, the offline guidance construction uses
the available dataset labels.
For CelebA-HQ, we follow the TFG protocol: attribute labels are assigned using
the classifier used for TFG guidance, while generated images are evaluated with
the provided evaluation classifier.
In all settings, reported accuracy is measured by the benchmark evaluation
classifier, not by an online objective optimized by \methodshort.
We report target accuracy, FID~\citep{heusel2017gans} for image quality, and
log-KID~\citep{binkowski2018demystifying} for CelebA-HQ.
Additional diversity and evaluator-robustness audits are reported in
\appref{app:diversity_calibration}.

\subsection{CIFAR-10: Controlled Benchmark}
\label{sec:cifar10}

\begin{table}[!t]
\centering
\caption{\textbf{CIFAR-10 label guidance comparison.} \methodshort achieves the highest accuracy and the best FID among external guidance baselines. Timing measured on A100 GPU for 16 samples with 100 DDIM steps.}
\label{tab:cifar10_main}
\vspace{1mm}
\small
\begin{tabular}{@{}lcccc@{}}
\toprule
\textbf{Method} & \textbf{Acc.} $\uparrow$ & \textbf{FID} $\downarrow$ & \textbf{Time}  \\
\midrule
TFG-4 & 77.1\% & 73.9 & 101.7s \\
Classifier-Guidance & 86.0\% & 41.9 & 6.9s \\
\midrule
Noise align. only ($\lambda$=3) & 62\% & 99 & \textbf{5.8s} \\
Noise align. only ($\lambda$=8) & 80\% & 120 & \textbf{5.8s}\\
RFM-only (all steps) & 94.8\% & \textbf{40.3} & 6.8s\\
\textbf{\methodshort} & \textbf{96.6\%} & {41.4} & {6.2s} \\
\bottomrule
\end{tabular}
\vspace{-2mm}
\end{table}

\cref{tab:cifar10_main} reports CIFAR-10 label-guidance results.
\methodshort reaches \textbf{96.6\%} target accuracy with FID 41.4, compared
with 77.1\% accuracy and FID 73.9 for TFG-4, and 86.0\% accuracy and FID 41.9
for classifier guidance~\citep{nichol2021improved}.
It is also \textbf{16$\times$ faster} than TFG in the reported setting.
The component rows show the roles of the two mechanisms: RFM-only guidance
already provides most of the accuracy and quality gain, while adding high-noise
noise alignment improves accuracy from 94.8\% to 96.6\% with a small FID change
from 40.3 to 41.4.

\begin{figure}[!t]
    \centering
    \includegraphics[width=0.6\columnwidth]{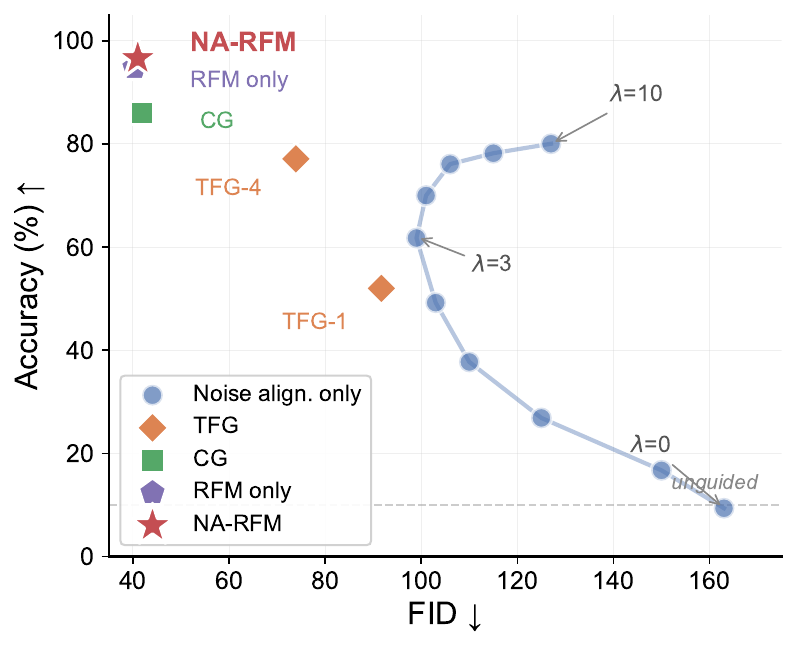}
    \caption{\textbf{Accuracy--FID trade-off for noise alignment only.} Points show noise-alignment-only results at varying $\lambda$; the dashed line indicates the Pareto frontier. Stronger noise alignment improves accuracy but degrades FID. The full \methodshort result improves over this noise-alignment-only trade-off by adding RFM activation steering.}
    \label{fig:noise_alignment_ablation}
\end{figure}

Notably, noise alignment alone is competitive with TFG-1 in accuracy, while being much faster and simpler to implement.
\cref{fig:noise_alignment_ablation} shows the broader noise-alignment-only
sweep: stronger alignment improves target control but degrades FID.
In the final sampler, this high-noise correction provides coarse control, while
intermediate/late RFM steering supplies the main accuracy and quality gains.
\appref{app:cifar10_perclass} gives per-class breakdowns.

\FloatBarrier

\subsection{ImageNet: Scaling to Higher Resolution}
\label{sec:imagenet}

ImageNet tests whether the same post-hoc guidance signals scale to a
high-resolution unconditional ADM model~\citep{dhariwal2021diffusion}.
Following TFG~\citep{ye2024tfg}, we evaluate classes 111, 222, 333, and 444
with 256 samples per class.
As shown in~\cref{tab:main_comparison}, \methodshort achieves \textbf{75.8\%}
average target accuracy, compared with 59.8\% for TFG-4.

\cref{tab:imagenet_efficiency} separates one-time preparation from per-image
sampling for the full four-class ImageNet evaluation ($4\times256=1024$ images),
excluding model and classifier pretraining.
\methodshort spends 25.45 minutes on activation collection and PCA/RFM
computation, then samples at 7.90 seconds per image at batch size 4, compared
with 79.70 seconds per image for TFG-4 under the same measurement.

\begin{table}[!tb]
\centering
\caption{\textbf{ImageNet offline and online computation.} A100 wall-clock timing for the reported four-class ImageNet evaluation (1024 images), excluding model/classifier pretraining. Offline columns are measured in minutes, the online column reports measured seconds per generated image at batch size 4, and the total column reports hours for the full 1024-image evaluation. The \methodshort offline setup is split into shared activation collection and measured PCA+RFM computation.}
\label{tab:imagenet_efficiency}
\vspace{1mm}
\begingroup
\setlength{\tabcolsep}{4pt}
\small
\begin{tabular}{@{}lcccc@{}}
\toprule
\textbf{Method} & \textbf{Act. collect.} & \textbf{PCA+RFM} & \textbf{Online} & \textbf{Total} \\
& \textbf{(min.)} & \textbf{(min.)} & \textbf{(sec./img)} & \textbf{(h)} \\
\midrule
TFG-4 & 0 & 0 & 79.70 & 22.7 \\
\textbf{\methodshort} & 15.42 & 10.03 & \textbf{7.90} & \textbf{2.67} \\
\bottomrule
\end{tabular}
\endgroup
\vspace{-2mm}
\end{table}

Equivalently, for $n$ generated images, the measured total time in seconds is
$T_{\methodshort}(n)=1527.16+7.90n$ and
$T_{\mathrm{TFG}\text{-}4}(n)=79.70n$.
The offline setup is amortized after roughly 22 generated images; on the
1024-image evaluation, total time drops from 22.7 to 2.67 hours.

\noindent\textbf{Per-Class Analysis.}
Accuracy is high on \textit{nematode} (83.2\%), \textit{hamster} (91.8\%),
and \textit{tandem bicycle} (97.7\%).
The hardest class is the fine-grained \textit{kuvasz} breed, where
\methodshort obtains 30.5\% top-1 accuracy and 74.2\% top-5 accuracy.
The top four predicted classes for this target, covering 71.1\% of samples, are
all visually similar large, light-colored dogs: kuvasz, malamute, Great
Pyrenees, and Eskimo dog (see \cref{fig:imagenet_samples,fig:kuvasz_confusion}).
This indicates semantic steering to the intended visual category, with residual
confusion among closely related breeds.

\subsection{CelebA: Multi-Attribute Guidance}
\label{sec:celeba}
The CelebA experiment evaluates multi-attribute steering, where the
target condition is a conjunction such as gender+hair or gender+age.
We construct reusable guidance signals for individual attributes and combine the
requested directions at inference.
This setting is nontrivial because marginal attribute directions can inherit
correlations from the training data; for example, 97\% of blonde samples are
female, so a ``blond'' direction can also encode gender.
Implementation details are given in \appref{app:implementation}.

\begin{table}[!tb]
\centering
\caption{\textbf{CelebA multi-attribute guidance.} Comparison with the per-combination TFG accuracies reported by~\citet{ye2024tfg} for gender+hair and gender+age guidance (256 samples). \methodshort is higher on 7 of 8 combinations, including two 100\% accuracy cases.}
\label{tab:celeba_multi}
\vspace{1mm}
\small
\begin{tabular}{@{}lcc@{}}
\toprule
\textbf{Attributes} & \textbf{TFG} & \textbf{\methodshort} \\
\midrule
\multicolumn{3}{@{}l}{\textit{Gender + Hair}} \\
Female + Non-Blond & \textbf{92.2\%} & 86.4\% \\
Female + Blond & 72.7\% & \textbf{80.1\%} \\
Male + Non-Blond & 89.8\% & \textbf{98.4\%} \\
Male + Blond & 46.7\% & \textbf{68.4\%} \\
\midrule
{Average} & 75.4\% & \textbf{83.3\%} \\
\midrule
\multicolumn{3}{@{}l}{\textit{Gender + Age}} \\
Young + Female & 92.9\% & \textbf{100.0\%} \\
Old + Female & 73.6\% & \textbf{85.2\%} \\
Young + Male & 93.6\% & \textbf{98.8\%} \\
Old + Male & 69.1\% & \textbf{100.0\%} \\
\midrule
{Average} & 82.3\% & \textbf{96.0\%} \\
\bottomrule
\end{tabular}
\vspace{-2mm}
\end{table}

\cref{tab:celeba_multi} presents our multi-attribute guidance results.
These reported stratified TFG accuracies have a higher overall average than the
CelebA TFG-1 entries in \cref{tab:main_comparison}, but the TFG paper does not
specify them as TFG-4.
We therefore compare against the reported stratified TFG scores and label the
column TFG.
\methodshort is higher on \textbf{7 out of 8} attribute combinations, with
average accuracy 89.7\% compared with 78.8\% for TFG.
Two combinations reach 100.0\% under the evaluation classifier
(Young+Female and Old+Male), and the largest gain is on Old+Male
(+30.9 percentage points).
TFG is only higher on Female+Non-Blond.

\vspace{1mm}
\noindent\textbf{Rare Combinations.}
The Male+Blond combination represents only 1\% of the CelebA training data.
In this setting, \methodshort reaches 68.4\% accuracy, improving over TFG by
21.7 percentage points.
\methodshort remains effective even when the target combination has few training
examples.

\subsection{Fine-Grained Out-of-Distribution Guidance}
\label{sec:finegrained}
The Birds-525 benchmark from~\citet{ye2024tfg} is a hard fine-grained steering
setting.
We steer an ImageNet ADM model toward four bird species: three are absent from
ImageNet, and Lucifer Hummingbird has only coarse overlap with the broader
ImageNet hummingbird class.
\cref{tab:birds} reports results for four species.
\methodshort reaches 14.1\% average target accuracy, compared with 2.2\% for
TFG-4, with the strongest results on Scarlet Macaw and Lucifer Hummingbird
(28.1\% and 21.5\%).
The absolute accuracy remains modest, as expected for fine-grained and partly
out-of-distribution targets.
Nevertheless, the improvement over TFG-4 and the qualitative samples in
\cref{fig:birds_samples_appendix,fig:birds_teaser_finegrained} suggest that
the learned directions carry target-species information beyond the ImageNet
label set.

\begin{figure}[!tb]
    \centering
    \begin{minipage}[t]{0.50\linewidth}
        \vspace{0pt}
        \centering
        \scriptsize
        \begin{tabular}{@{}lcc@{}}
        \toprule
        \textbf{Species} & \textbf{Acc.} & \textbf{FID} \\
        \midrule
        Lucifer Hummingbird & \textbf{21.5\%} & 24.76 \\
        Scarlet Macaw & \textbf{28.1\%} & 104.1 \\
        Fairy Tern & \textbf{2.7\%} & 93.48 \\
        Brown Headed Cowbird & \textbf{3.9\%} & 65.72 \\
        \midrule
        {Average (Our)} & \textbf{14.1\%} & \textbf{72.02} \\
        TFG-4 avg. & 2.2\% & 259 \\
        \bottomrule
        \end{tabular}
        \captionof{table}{\textbf{Fine-grained bird species guidance (OOD).}
        Target accuracy/FID on Birds-525 species using 256 samples per species and deterministic DDIM sampling ($\eta=0$, 100 steps).}
        \label{tab:birds}
    \end{minipage}\hfill
    \begin{minipage}[t]{0.45\linewidth}
        \vspace{0pt}
        \centering
        \includegraphics[width=0.35\linewidth]{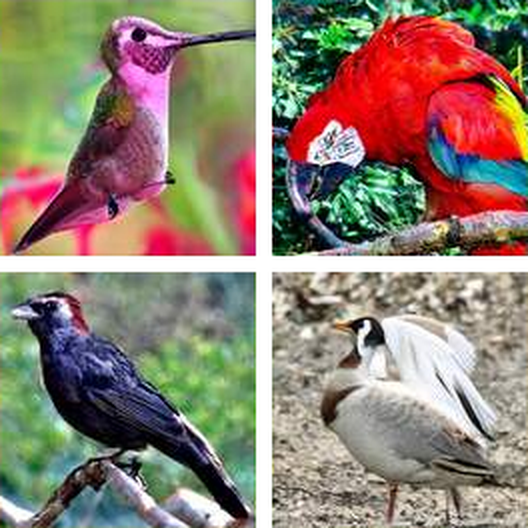}
        \caption{\textbf{Fine-grained bird species guidance.} Correctly classified out-of-distribution samples: Lucifer Hummingbird, Scarlet Macaw, Brown Headed Cowbird, and Fairy Tern.}
        \label{fig:birds_teaser_finegrained}
    \end{minipage}
\end{figure}

\subsection{Ablation Studies}
\label{sec:ablations}
We conduct a noise level ablation study on CIFAR-10 by training RFM classifiers at noise levels $\sigma \in \{0.6, 1.0, 1.5, 2.0, 5.0\}$ and reporting RFM fit AUC on collected activations and generation accuracy. Results in~\cref{fig:sigma_ablation} show strong performance across $\sigma \in [0.6,2.0]$ and degradation at $\sigma=5.0$, supporting the use of lower-noise activations for direction discovery.
Additional ablations and diagnostics are in the appendix: block selection (\appref{app:block_selection}), full guidance-window sweeps and timing studies (\appref{app:guidance_window_full}), and direction learning ablations comparing RFM with a difference-of-means direction (\appref{app:direction_ablation}).
\begin{figure}[!ht]
    \centering
    \includegraphics[width=0.62\linewidth]{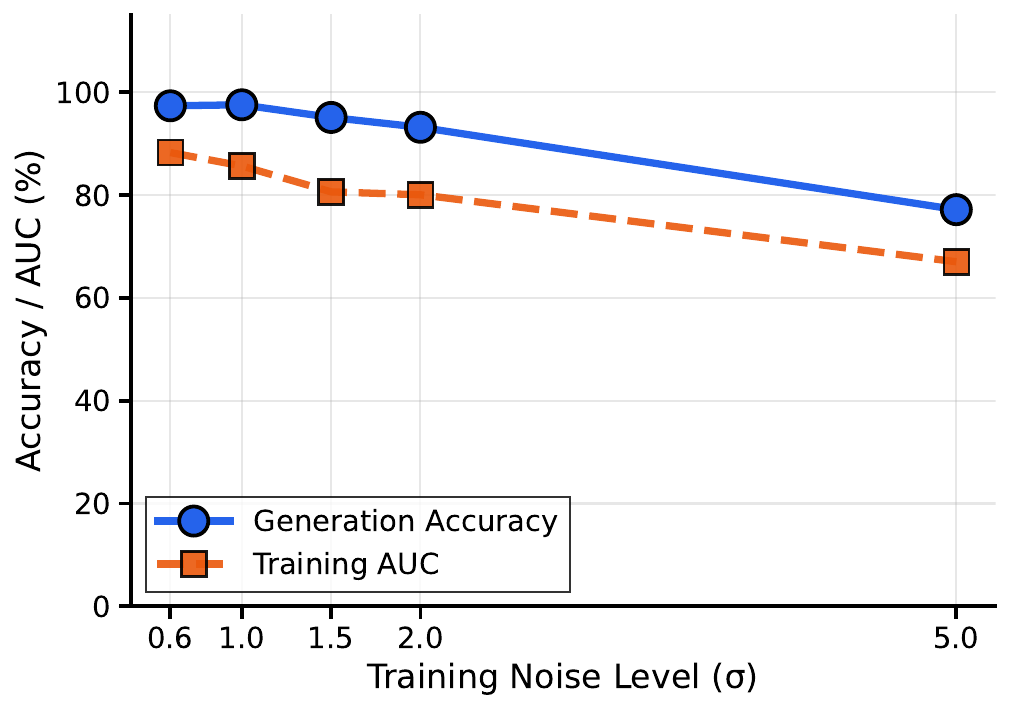}
    \caption{\textbf{Training noise level ablation.} Generation accuracy remains strong across $\sigma \in [0.6, 2.0]$ and degrades at $\sigma=5.0$.}
    \label{fig:sigma_ablation}
\end{figure}
\FloatBarrier

\noindent\textbf{Additional Architectures.}
The main comparisons use unconditional U-Nets to match prior post-hoc guidance
benchmarks.
Appendix~\ref{app:extensions} tests the same offline/online procedure outside
the main setting. On Stable Diffusion~1.5~\citep{rombach2022high},
\methodshort works alongside text conditioning for shallow depth-of-field
steering: increasing the RFM scale gives a smooth DoF sweep that often keeps
the prompt-specified main subject recognizable
(\cref{fig:dof_grid}); quantitatively, the foreground/background sharpness ratio
increases from 2.25 to 2.58 using a Depth~Anything~V2-based metric
\citep{yang2024depthv2}. On transformer-based SiT-XL/2~\citep{ma2024sit},
adding RFM activation steering to noise alignment raises ImageNet average
accuracy from 12.9\% to 61.3\% and lowers FID from 220.9 to 151.8
(\cref{tab:sit_results}).
Broader evaluation beyond these settings remains future work.

\section{Discussion and Limitations}
\label{sec:conclusion}

\methodshort constructs target-specific steering directions offline from
labeled examples, then samples without guidance-classifier evaluations or
classifier-gradient backpropagation.
Across CIFAR-10, ImageNet, CelebA, and fine-grained bird species, it gives
stronger target control than gradient-based post-hoc baselines.
The method avoids expensive guidance-classifier training and off-the-shelf
classifier gradients, but it still requires example data for the target
condition to construct the guidance signals.
Broader validation for more refined conditions, more complex datasets, and more
architectures remains future work.

\section*{Acknowledgements}

This material is based upon work supported by the Defense Advanced Research Projects Agency (DARPA) under Contract No.\ HR001125CE020, by the National Science Foundation (NSF) under grants CCF-2112665, MFAI 2502258, and MFAI 2502084, and by the Office of Naval Research (ONR) under grant N000142412631.
We also gratefully acknowledge computational support provided through the NSF ACCESS program (allocation TG-CIS220009).
We thank Xiao Lin and Yi Yao for helpful discussions.

\section*{Impact Statement}

This paper presents an empirical method for improving controllable generation with diffusion models.
Improved control over generative models can support content creation and accessibility, while also increasing misuse risks.
We encourage responsible development of both generative and detection technologies.

\bibliographystyle{plainnat}
\bibliography{\archivebib}

\newpage
\appendix
\onecolumn

\section{A Theoretical View on RFM Direction Transfer}
\label{app:theory_transfer}

We consider a simplified setting that isolates one mechanism behind RFM direction transfer.  The setting is a binary class-conditional Gaussian mixture with shared covariance, and the object of interest is the average gradient outer
product (AGOP) of the Bayes log-odds under forward noising.
In this model the shared covariance makes the log-odds affine, so its gradient is constant and the AGOP is rank one.  Across noise levels, the corresponding direction changes only through a covariance-eigenvalue reweighting.
This simplified model gives a local mechanism for the empirical
transfer behavior: when the relevant class-separating components are reweighted similarly across the guidance window, the AGOP direction changes little with noise.

\begin{proposition}[Direction transfer under shared covariance]
\label{prop:shared_cov_transfer}
Let $y\in\{0,1\}$ have equal class priors, and suppose
\[
    \x_0 \mid y{=}k \sim \cN(\bm{\mu}_k,\Sigma), \qquad k\in\{0,1\},
\]
where $\Sigma$ is positive definite and
$\bm{\delta}:=\bm{\mu}_1-\bm{\mu}_0\neq 0$.
Let the forward noising process be
\[
    \x_t = \alpha_t \x_0 + \beta_t \bm{\epsilon},
    \qquad
    \bm{\epsilon}\sim \cN(\bm{0},\bm{I}), \quad \alpha_t>0,
\]
and define the noise-to-signal ratio
$\sigma_t:=\beta_t/\alpha_t$.
Then the Bayes log-odds at noise level $t$ is affine in $\x_t$, and the AGOP of
this log-odds is rank one.  Its top eigenvector is proportional to
\[
    \bm{v}^{(t)}
    \propto
    (\Sigma+\sigma_t^2\bm{I})^{-1}\bm{\delta}.
\]
Consequently:
\begin{enumerate}\denselist
    \item \textbf{Isotropic covariance.}
    If $\Sigma=s^2\bm{I}$, then the normalized direction $\bm{v}^{(t)}$ is
    independent of $t$.

    \item \textbf{Anisotropic covariance.}
    If $\Sigma=Q\mathrm{diag}(\nu_i)Q^\top$ and
    $\bm{\delta}=\sum_i a_i\bm{q}_i$ in the eigenbasis of $\Sigma$, then
    \[
        (\Sigma+\sigma_t^2\bm{I})^{-1}\bm{\delta}
        =
        \sum_i
        \frac{a_i}{\nu_i+\sigma_t^2}\,\bm{q}_i .
    \]
    Thus changing the noise level, in this setting, only reweights the original components of the class-separation vector.
    Directional drift is then small when the active components of $\bm{\delta}$ lie
    in a spectral range that is reweighted similarly over the timesteps used for
    steering, and the direction varies continuously with $\sigma_t$.
\end{enumerate}
\end{proposition}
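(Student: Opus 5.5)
The plan is to compute the class-conditional law of $\x_t$ directly and read off the log-odds. Conditional on $y=k$, $\x_t=\alpha_t\x_0+\beta_t\bm{\epsilon}$ is an affine image of independent Gaussians, so
\[
\x_t\mid y{=}k\sim\cN\bigl(\alpha_t\bm{\mu}_k,\ \alpha_t^2\Sigma+\beta_t^2\bm{I}\bigr)=\cN\bigl(\alpha_t\bm{\mu}_k,\ \alpha_t^2(\Sigma+\sigma_t^2\bm{I})\bigr).
\]
Write $S_t:=\alpha_t^2(\Sigma+\sigma_t^2\bm{I})$. It is positive definite because $\Sigma$ is, and the same $S_t$ serves both classes. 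With equal priors, the Bayes log-odds is $\ell_t(\x_t)=\log p(\x_t\mid y{=}1)-\log p(\x_t\mid y{=}0)$. The normalizing constants and the quadratic terms $\x_t^\top S_t^{-1}\x_t$ cancel, leaving
\[
\ell_t(\x_t)=\alpha_t\bm{\delta}^\top S_t^{-1}\x_t-\tfrac{\alpha_t^2}{2}\bigl(\bm{\mu}_1^\top S_t^{-1}\bm{\mu}_1-\bm{\mu}_0^\top S_t^{-1}\bm{\mu}_0\bigr),
\]
which is affine in $\x_t$. Hence $\nabla\ell_t\equiv \bm{g}_t:=\alpha_t S_t^{-1}\bm{\delta}=\alpha_t^{-1}(\Sigma+\sigma_t^2\bm{I})^{-1}\bm{\delta}$, and this gradient is constant in $\x_t$.

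For the AGOP, the gradient is constant, so for any distribution over $\x_t$ (population or empirical) the AGOP equals $\bm{g}_t\bm{g}_t^\top$. The vector $\bm{g}_t$ is nonzero because $\bm{\delta}\neq 0$ and the inverse matrix is nonsingular. So the AGOP has rank one, and its unique nonzero eigenvalue $\|\bm{g}_t\|^2$ has eigenvector $\bm{g}_t$. Dropping the positive scalar $\alpha_t^{-1}$ gives $\bm{v}^{(t)}\propto(\Sigma+\sigma_t^2\bm{I})^{-1}\bm{\delta}$.

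For the consequences, I substitute the spectral decomposition. In the isotropic case $\Sigma=s^2\bm{I}$, the direction is $(s^2+\sigma_t^2)^{-1}\bm{\delta}$, a positive multiple of $\bm{\delta}$, so the normalized direction is $\bm{\delta}/\|\bm{\delta}\|$ for every $t$. In the anisotropic case, $(\Sigma+\sigma_t^2\bm{I})^{-1}=Q\,\mathrm{diag}\bigl((\nu_i+\sigma_t^2)^{-1}\bigr)Q^\top$. Applying it to $\bm{\delta}=\sum_i a_i\bm{q}_i$ gives the stated reweighted sum.

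Continuity of the normalized direction follows because each coefficient $a_i/(\nu_i+\sigma_t^2)$ is continuous in $\sigma_t\ge0$ and the vector is never zero, so normalization is continuous. To make the drift statement quantitative, I would note that if $a_i\neq0$ only for $\nu_i$ in a range $[\nu_-,\nu_+]$, then the ratio of weights between two noise levels $\sigma,\sigma'$ lies in an interval $[r_-,r_+]$ determined by $(\nu+\sigma^2)/(\nu+\sigma'^2)$ over that range. The cosine between the two directions is then bounded below by $2\sqrt{r_-r_+}/(r_-+r_+)$, by a Kantorovich-type inequality. The main obstacle is only this informal drift claim, which I will state via that bound rather than prove a sharp constant; the rest is routine Gaussian algebra.
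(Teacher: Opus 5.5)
Your proposal is correct and follows essentially the same route as the paper. Both show that the noised class-conditionals are $\cN(\alpha_t\bm{\mu}_k,\alpha_t^2(\Sigma+\sigma_t^2\bm{I}))$, that the log-odds is affine with constant gradient $\alpha_t^{-1}(\Sigma+\sigma_t^2\bm{I})^{-1}\bm{\delta}$, and then read off the rank-one AGOP and the spectral reweighting; the paper's choice to do the clean case first and then repeat it at level $t$ is immaterial. Your Kantorovich-type bound $2\sqrt{r_-r_+}/(r_-+r_+)$ on the cosine between directions at two noise levels is also correct, and it gives a quantitative version of the drift remark that the paper states only informally.
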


\begin{proof}
For equal class priors, the clean Bayes log-odds is the log-likelihood ratio. Writing
\[
\begin{aligned}
    \log p(\x_0\mid y{=}k)
    &=      -\frac{d}{2}\log(2\pi)-\frac{1}{2}\log\det\Sigma
       -\frac{1}{2}(\x_0-\bm{\mu}_k)^\top
       \Sigma^{-1}(\x_0-\bm{\mu}_k),
\end{aligned}
\]
the determinant terms and the quadratic term in $\x_0$ cancel in the ratio. Thus, with
\[
    \ell_0(\x_0)
    :=
    \log \frac{p(y{=}1\mid \x_0)}{p(y{=}0\mid \x_0)},
\]
we obtain
\[
\begin{aligned}
    \ell_0(\x_0)
    &=
    -\frac{1}{2}(\x_0-\bm{\mu}_1)^\top
       \Sigma^{-1}(\x_0-\bm{\mu}_1)
    +\frac{1}{2}(\x_0-\bm{\mu}_0)^\top
       \Sigma^{-1}(\x_0-\bm{\mu}_0)  \\
    &=
    \bm{\delta}^\top\Sigma^{-1}
    \left(\x_0-\frac{\bm{\mu}_0+\bm{\mu}_1}{2}\right).
\end{aligned}
\]
Therefore
\[
    \nabla_{\x_0}\ell_0(\x_0)=\Sigma^{-1}\bm{\delta},
    \qquad
    \E\!\left[
    \nabla\ell_0(\x_0)\nabla\ell_0(\x_0)^\top
    \right]
    =
    (\Sigma^{-1}\bm{\delta})(\Sigma^{-1}\bm{\delta})^\top .
\]
The Bayes-log-odds AGOP is rank one, with top eigenvector proportional to
$\Sigma^{-1}\bm{\delta}$.

Under forward noising, the conditional distribution remains Gaussian:
\[
    \x_t\mid y{=}k
    \sim
    \cN(\alpha_t\bm{\mu}_k,\alpha_t^2\Sigma+\beta_t^2\bm{I}).
\]
Let
\[
    \Sigma_t:=\alpha_t^2\Sigma+\beta_t^2\bm{I},
    \qquad
    \bm{\mu}_{k,t}:=\alpha_t\bm{\mu}_k,
    \qquad
    \bm{\delta}_t:=\bm{\mu}_{1,t}-\bm{\mu}_{0,t}
    =\alpha_t\bm{\delta}.
\]
Repeating the same calculation with shared covariance $\Sigma_t$ gives
\[
    \ell_t(\x_t)
    =
    \bm{\delta}_t^\top\Sigma_t^{-1}
    \left(\x_t-\frac{\bm{\mu}_{0,t}+\bm{\mu}_{1,t}}{2}\right),
    \qquad
    \nabla_{\x_t}\ell_t(\x_t)
    =
    \alpha_t\Sigma_t^{-1}\bm{\delta}.
\]
Since $\alpha_t>0$ and
\[
    \Sigma_t
    =
    \alpha_t^2(\Sigma+\sigma_t^2\bm{I}),
\]
the normalized gradient direction is proportional to
\[
    (\Sigma+\sigma_t^2\bm{I})^{-1}\bm{\delta}.
\]
The AGOP of $\ell_t$ is again the outer product of this constant gradient with
itself, so it is rank one with the same top direction.  The isotropic and
anisotropic statements follow by substituting $\Sigma=s^2\bm{I}$ and by
expanding $\bm{\delta}$ in the eigenbasis of $\Sigma$.
\end{proof}

\paragraph{Activation-Space Interpretation.}
Proposition~\ref{prop:shared_cov_transfer} is stated in data space, but can also be interpreted in activation space.  Suppose that, within a selected U-Net block and
over the RFM guidance window, the class-conditional activations are locally
approximated by a shared-covariance Gaussian mixture.  The Bayes-log-odds AGOP
direction then has the same form as in the proposition: a
covariance-preconditioned class-mean difference whose coordinates are reweighted
as the noise level changes.  Direction transfer is therefore expected when the
class-separating activation components remain in a stable spectral subspace
over the intermediate/late window.  This is the behavior measured empirically in
\cref{fig:temporal_transfer}.

\section{Sample-Space PCA and RFM Computation}
\label{app:sample_space_computation}

Both PCA noise alignment and RFM direction extraction operate in regimes where
the ambient dimension can be much larger than the number of examples. We
therefore compute the needed eigenspaces through sample-space matrices.

For PCA noise alignment, let $A_c\in\mathbb{R}^{N_c\times d_x}$ be the
row-stacked image matrix for class $c$ after subtracting the class mean. We use
the compact SVD
\[
    A_c=U_c S_c V_c^\top,
\]
so the retained columns of $V_c$ are the principal directions and
\[
    \nu_{cj}=\frac{S_{c,jj}^2}{N_c-1}
\]
are the covariance eigenvalues used in \cref{eq:pca_guidance_main}. This avoids
forming the dense $d_x\times d_x$ image covariance matrix and is the same
high-dimensional, low-sample-size computation used for the unconditional PCA
statistics.

For RFM, let $G\in\mathbb{R}^{N\times d_h}$ stack the RFM gradient features
$\nabla_{\bm{h}} f(\bm{h}_i)^\top$ as rows.
The activation-space AGOP matrix is
\[
    A=\frac{1}{N}G^\top G\in\mathbb{R}^{d_h\times d_h}.
\]
In our regime $d_h=C_\ell H_\ell W_\ell$ is large and $N\ll d_h$, so we instead
diagonalize the sample-space matrix
\[
    B=\frac{1}{N}GG^\top\in\mathbb{R}^{N\times N}.
\]
If $B\bm{a}_j=\rho_j\bm{a}_j$ with $\rho_j>0$ and
$\|\bm{a}_j\|_2=1$, define
\[
    \bm{u}_j=\frac{G^\top\bm{a}_j}{\sqrt{N\rho_j}} .
\]
Then $\|\bm{u}_j\|_2=1$ and
\[
\begin{aligned}
    A\bm{u}_j
    &=
    \frac{1}{N}G^\top G
    \frac{G^\top\bm{a}_j}{\sqrt{N\rho_j}}
    =
    \frac{1}{\sqrt{N\rho_j}}G^\top
    \left(\frac{1}{N}GG^\top\bm{a}_j\right)  \\
    &=
    \rho_j\frac{G^\top\bm{a}_j}{\sqrt{N\rho_j}}
    =
    \rho_j\bm{u}_j .
\end{aligned}
\]
Thus every positive-eigenvalue eigenvector of the sample-space matrix gives the
corresponding nonzero eigenvector of the activation-space AGOP.
The zero eigenspace of $A$ is irrelevant for direction discovery because it
contains directions orthogonal to all RFM gradient features.

\section{Block Selection Ablation}
\label{app:block_selection}

We compare NA-RFM guidance using features from three U-Net locations: Encoder-9 (8$\times$8 resolution), Middle block (4$\times$4 bottleneck), and Decoder-6 (8$\times$8 resolution), using 256 samples per CIFAR-10 class to isolate the effect of block choice.

\begin{figure}[H]
\centering
\includegraphics[width=0.5\textwidth]{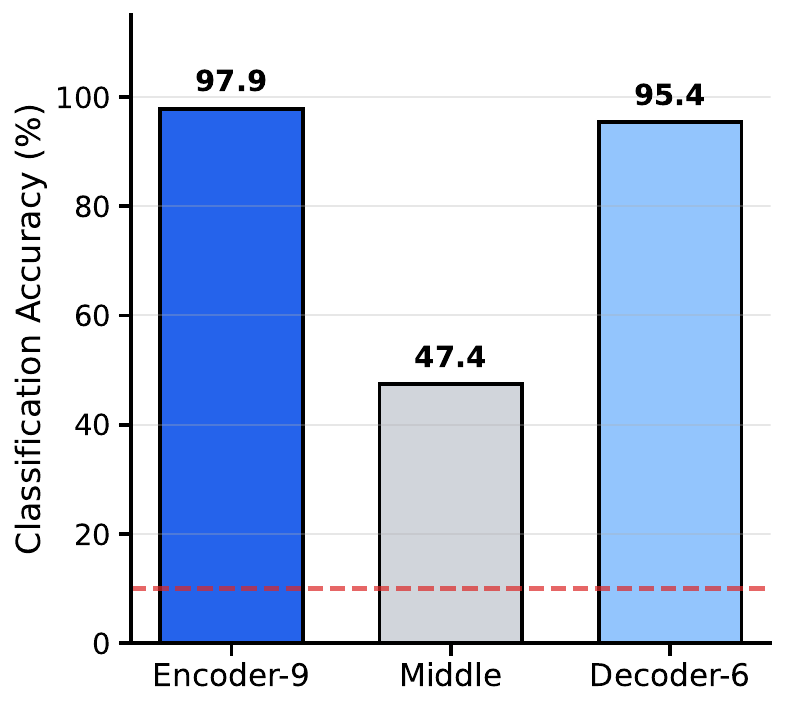}
\caption{\textbf{Block selection ablation.} NA-RFM guidance accuracy on CIFAR-10 (256 samples per class) for three U-Net block locations. Encoder-9 achieves highest accuracy (97.9\%), Decoder-6 is comparable (95.4\%), and the Middle block is lower in this CIFAR-10 setting (47.4\%). Red dashed line indicates random chance (10\%).}
\label{fig:block_ablation}
\end{figure}

As shown in \cref{fig:block_ablation}, both encoder and decoder blocks at 8$\times$8 resolution achieve high accuracy (97.9\% and 95.4\% respectively), while the middle bottleneck block is lower in this CIFAR-10 ablation (47.4\%).
For CIFAR-10, these results favor an intermediate spatial resolution:
the block retains more spatial detail than the bottleneck while still carrying
class-relevant features.

The CIFAR-10 gap is consistent with two structural properties of the U-Net bottleneck. (1) \emph{Limited spatial resolution.} The middle block operates at the network's spatial bottleneck ($4\times4$ for $32\times32$ inputs), where spatial information is highly compressed; a single steering direction there has less spatial capacity than in the neighboring $8\times8$ blocks. (2) \emph{Skip connections bypass the bottleneck.} Decoder blocks combine upsampled middle-block features with encoder features through skip connections, so an edit applied inside the bottleneck can be partly diluted by unedited encoder features on the way out. Editing at the last encoder block before the bottleneck propagates through both the bottleneck and the skip path, giving the steering direction two downstream routes.
This explanation is setting-dependent.

\section{Full Guidance Window Ablation}
\label{app:guidance_window_full}

We test how the RFM guidance window affects generation. Although each RFM direction is trained at one reference noise level, sampling can apply the same direction over a wider range of timesteps.

\cref{fig:ablation_timing} presents a timing-window ablation with
\textbf{RFM-only} guidance on CIFAR-10 with 256 samples per class. Steering in the second half (steps 50--99) reaches
90.2\% accuracy, close to the full-window result, whereas steering in the first
half (steps 0--49) reaches 24.2\%. Steering only near the RFM training noise
level gives 13.4\% accuracy. Accuracy increases as the late guidance window
grows: 10 steps gives 35.2\%, 20 steps gives 60.8\%, 30 steps gives 75.6\%,
and 50 steps gives 90.2\%.

\begin{figure}[H]
    \centering
    \includegraphics[width=0.78\textwidth]{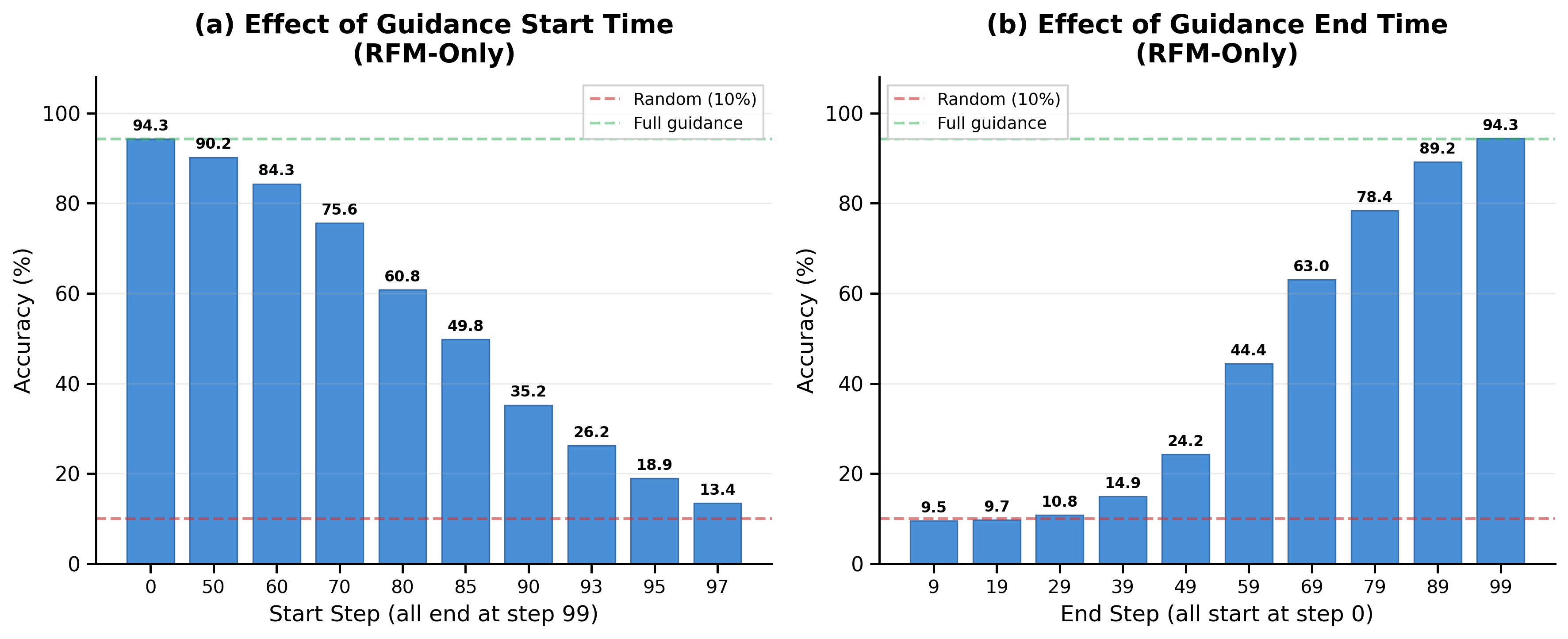}
    \caption{\textbf{RFM-only timing window ablation.} We vary which DDIM steps receive RFM guidance. \textit{(a) Effect of guidance start time} -- end step fixed at 99, start swept from 0 to 97: accuracy decreases from 94.3\% (full window) to 13.4\% when steering is restricted to the narrow band near the RFM training noise level. \textit{(b) Effect of guidance end time} -- start fixed at 0, end swept from 9 to 99: accuracy rises from 9.5\% (early steps only) to 94.3\% (full window). The two panels indicate that, in this CIFAR-10 setting, RFM steering benefits from a sufficiently long late-stage window.}
    \label{fig:ablation_timing}
\end{figure}

\section{Additional Experimental Results}
\label{app:additional_results}

\subsection{CIFAR-10 Per-Class Analysis}
\label{app:cifar10_perclass}

\cref{tab:cifar10_perclass} reports the per-class accuracy and
FID behind the CIFAR-10 average in \cref{tab:cifar10_main}. Accuracy remains
high across the ten classes, while per-class FID ranges from 22.5 (automobile)
to 62.3 (airplane).

\begin{table}[H]
    \centering
    \caption{\textbf{CIFAR-10 per-class results.} Per-class accuracy and FID for the reported \methodshort setting.}
    \label{tab:cifar10_perclass}
    
    \footnotesize
    \begin{tabular}{@{}lcc@{}}
        \toprule
        \textbf{Class}   & \textbf{Accuracy} & \textbf{FID} \\
        \midrule
        airplane         & 95.5\%            & 62.3         \\
        automobile       & 98.1\%            & 22.5         \\
        bird             & 96.2\%            & 54.9         \\
        cat              & 97.2\%            & 50.7         \\
        deer             & 98.4\%            & 38.5         \\
        dog              & 92.1\%            & 44.9         \\
        frog             & 94.6\%            & 44.3         \\
        horse            & 99.4\%            & 36.3         \\
        ship             & 96.9\%            & 35.6         \\
        truck            & 97.4\%            & 24.3         \\
        \midrule
        \textbf{Average} & \textbf{96.6\%}   & \textbf{41.4} \\
        \bottomrule
    \end{tabular}
    \vspace{-2mm}
\end{table}

\subsection{CIFAR-10 Samples}
\label{app:cifar10_samples}

These fixed-seed grids show how the same noise draws change under
noise alignment and under the full \methodshort sampler. They provide visual
context for the CIFAR-10 results in \cref{tab:cifar10_main,tab:cifar10_perclass};
the quantitative accuracy and FID are reported in the tables.

\begin{figure}[p]
    \centering
    \includegraphics[width=\linewidth]{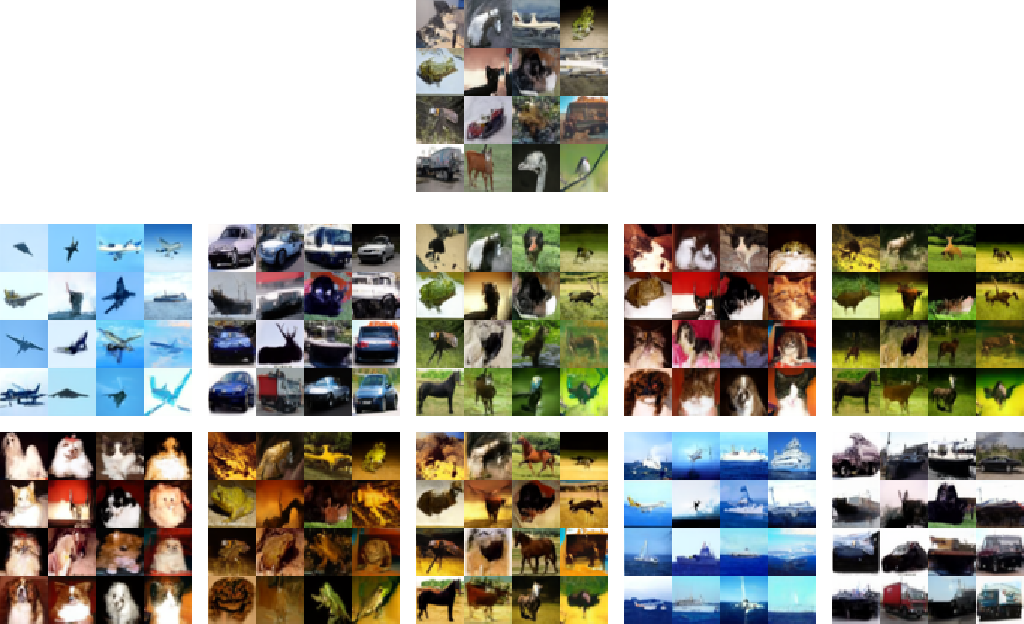}
    \caption{\textbf{Fixed-seed CIFAR-10: unguided and noise alignment.} Each panel shows the first 16 samples from the shared 64-sample fixed-seed bank. The \textbf{top row} is unguided; the guided panels use noise alignment ($\lambda=8$), ordered as airplane, automobile, bird, cat, deer, dog, frog, horse, ship, and truck.}
    \label{fig:cifar10_fixed_seed_noise_alignment}
\end{figure}

\begin{figure}[p]
    \centering
    \includegraphics[width=\linewidth]{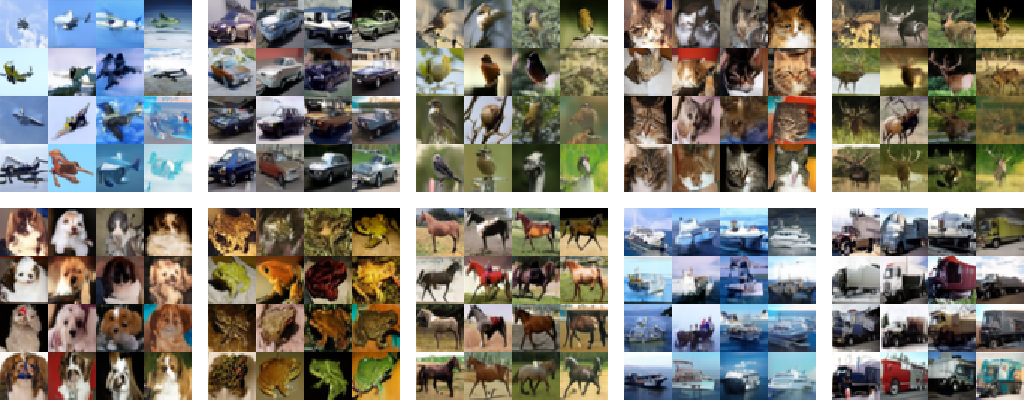}
    \caption{\textbf{Fixed-seed CIFAR-10: \methodshort.} Each panel shows the same first 16 fixed-seed samples using the reported \methodshort setting ($\lambda=3$, $w_{\mathrm{RFM}}=1$, $s=2$), ordered as airplane, automobile, bird, cat, deer, dog, frog, horse, ship, and truck.}
    \label{fig:cifar10_fixed_seed_na_rfm}
\end{figure}

\clearpage

\subsection{CelebA Multi-Attribute Guidance Samples}

We visualize the two multi-attribute CelebA settings reported in
\cref{tab:celeba_multi}. Each panel corresponds to a conjunction of two
attributes, and the percentages are measured by the same attribute classifiers
used for the table.

\begin{figure}[H]
\centering
\begin{subfigure}[b]{0.24\textwidth}
    \includegraphics[width=0.84\linewidth]{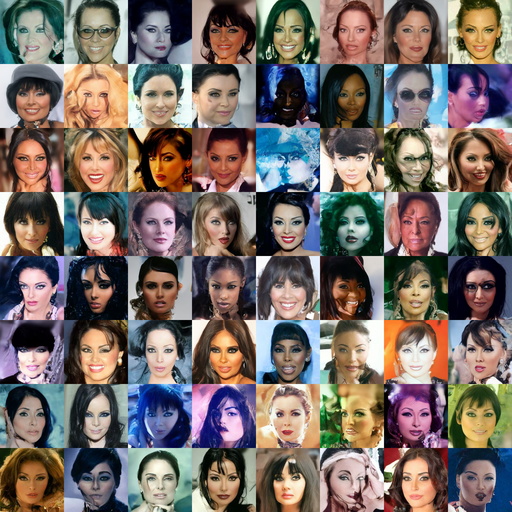}
    \caption{Female + Nonblond\\(Accuracy 86.4\%)}
\end{subfigure}
\hfill
\begin{subfigure}[b]{0.24\textwidth}
    \includegraphics[width=0.84\linewidth]{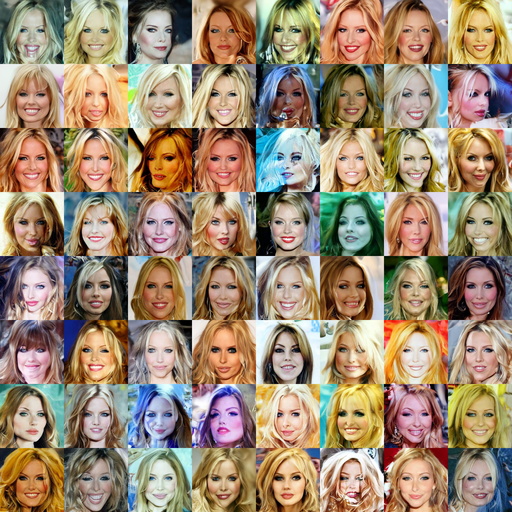}
    \caption{Female + Blond\\(Accuracy 80.1\%)}
\end{subfigure}
\hfill
\begin{subfigure}[b]{0.24\textwidth}
    \includegraphics[width=0.84\linewidth]{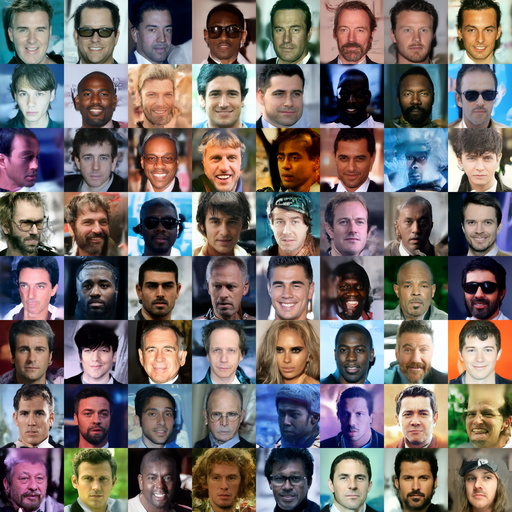}
    \caption{Male + Nonblond\\(Accuracy 98.4\%)}
\end{subfigure}
\hfill
\begin{subfigure}[b]{0.24\textwidth}
    \includegraphics[width=0.84\linewidth]{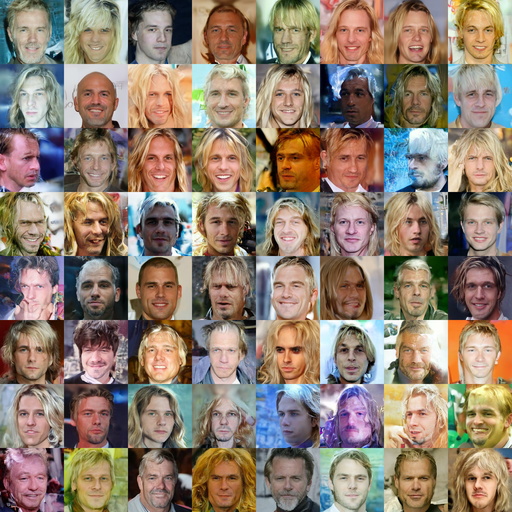}
    \caption{Male + Blond\\(Accuracy 68.4\%)}
\end{subfigure}
\caption{\textbf{CelebA Gender+Hair guidance results.} Each panel shows 64 samples generated with the corresponding combined attribute guidance. Accuracy indicates the fraction classified correctly for both attributes. Our method achieves 83.3\% average accuracy vs.\ TFG's 75.4\%.}
\label{fig:celeba_hair}
\end{figure}

\begin{figure}[H]
\centering
\begin{subfigure}[b]{0.24\textwidth}
    \includegraphics[width=\linewidth]{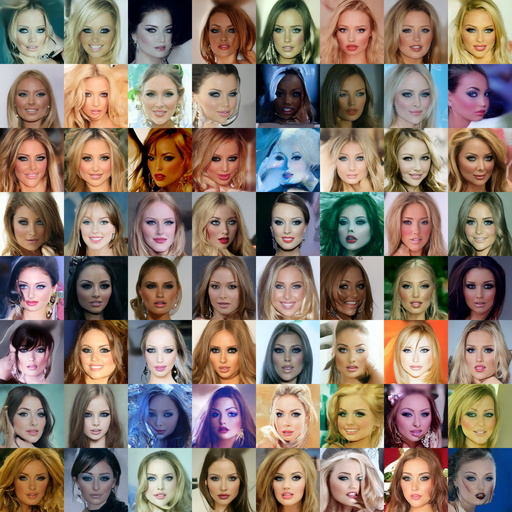}
    \caption{Young + Female\\(Accuracy 100\%)}
\end{subfigure}
\hfill
\begin{subfigure}[b]{0.24\textwidth}
    \includegraphics[width=\linewidth]{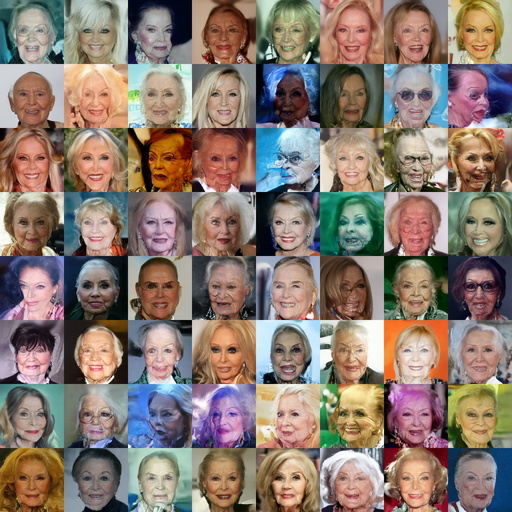}
    \caption{Old + Female\\(Accuracy 85.2\%)}
\end{subfigure}
\hfill
\begin{subfigure}[b]{0.24\textwidth}
    \includegraphics[width=\linewidth]{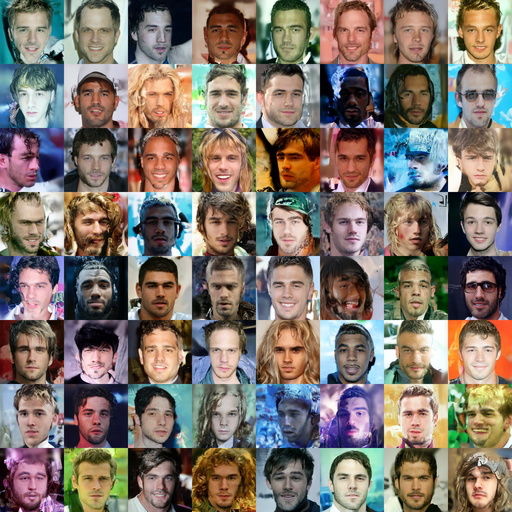}
    \caption{Young + Male\\(Accuracy 98.8\%)}
\end{subfigure}
\hfill
\begin{subfigure}[b]{0.24\textwidth}
    \includegraphics[width=\linewidth]{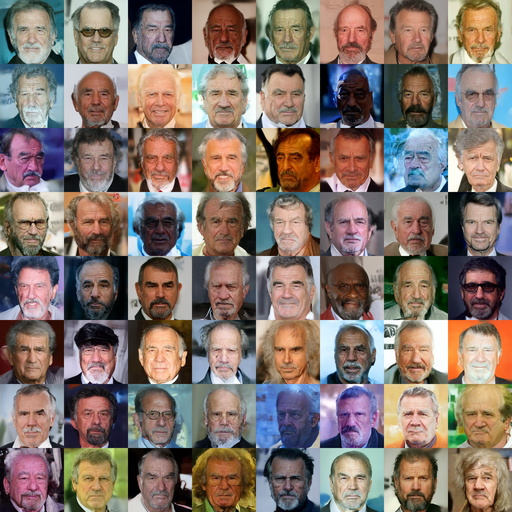}
    \caption{Old + Male\\(Accuracy 100\%)}
\end{subfigure}
\caption{\textbf{CelebA Gender+Age guidance results.} Each panel shows 64 samples generated with the corresponding combined attribute guidance. Accuracy indicates the fraction classified correctly for both attributes. Our method achieves 96.0\% average accuracy vs.\ TFG's 82.3\%.}
\label{fig:celeba_age}
\end{figure}

\subsection{Fine-Grained Bird Species Guidance Samples}

The following grids show samples for the four bird targets in
\cref{tab:birds}. The percentages in the subcaptions are exact species-level
accuracy under the Birds-525 classifier; visually similar but different species
count as errors in this metric.

\begin{figure}[H]
\centering
\begin{subfigure}[b]{0.49\textwidth}
    \includegraphics[width=\textwidth]{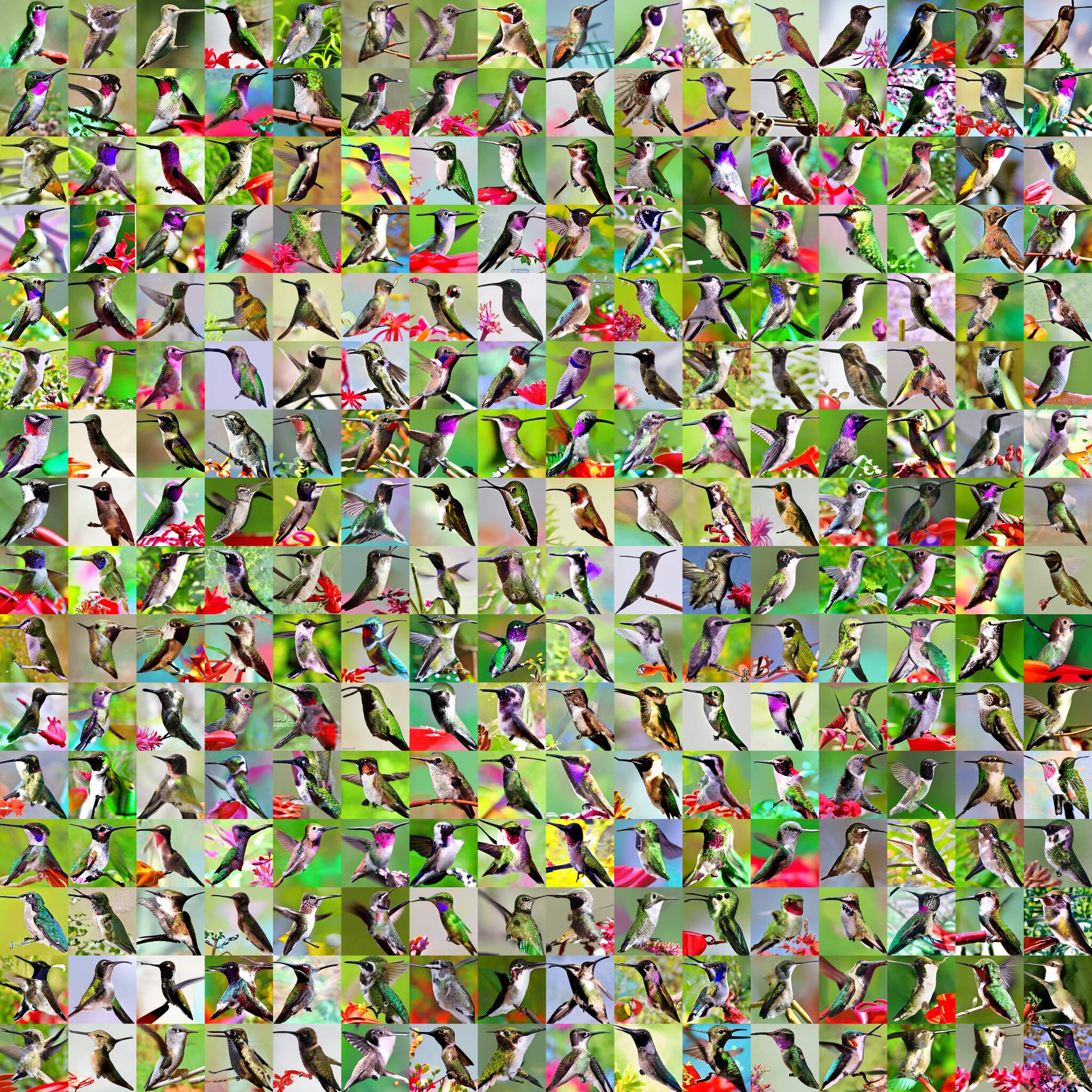}
    \caption{Lucifer Hummingbird (21.5\%)}
\end{subfigure}
\hfill
\begin{subfigure}[b]{0.49\textwidth}
    \includegraphics[width=\textwidth]{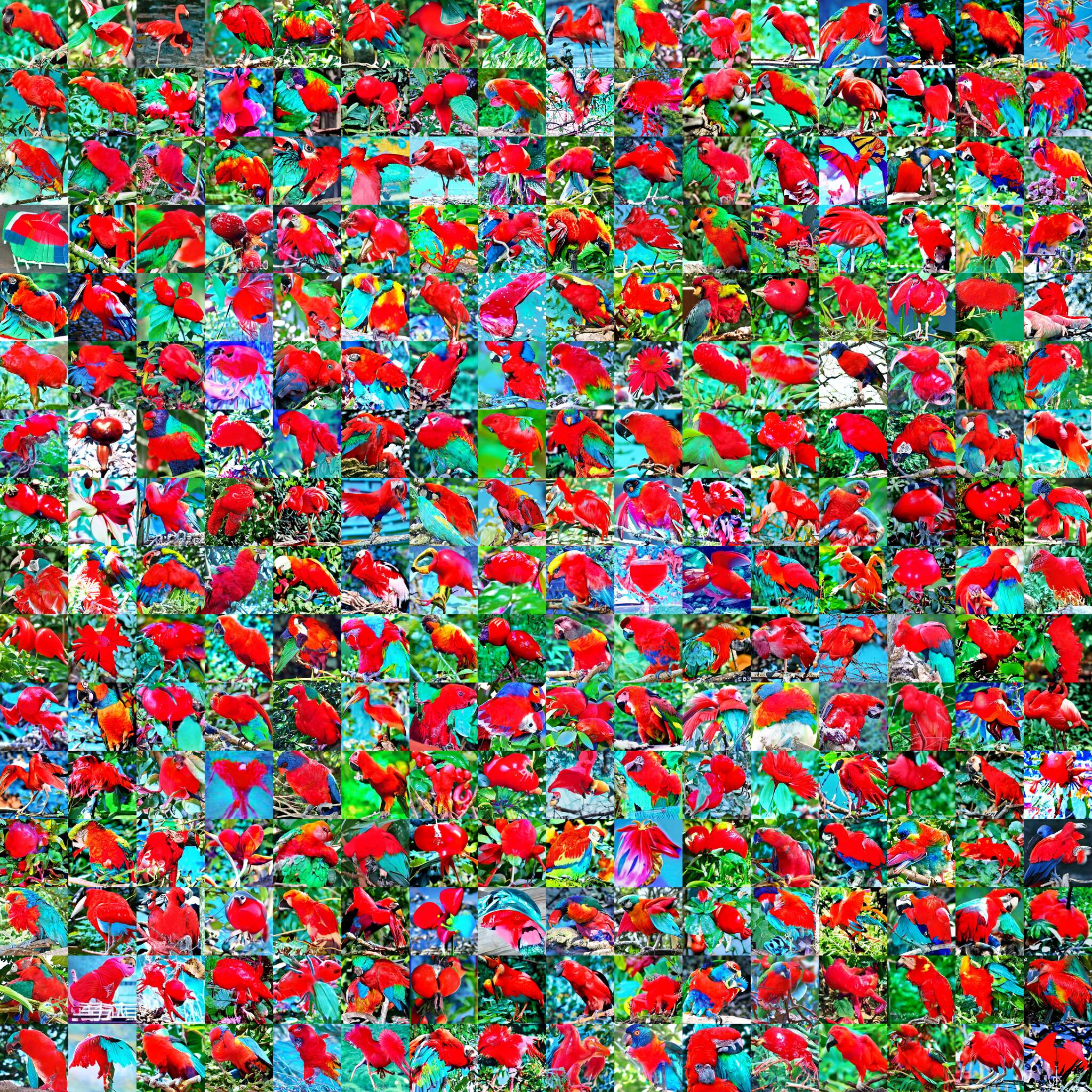}
    \caption{Scarlet Macaw (28.1\%)}
\end{subfigure}

\vspace{2mm}

\begin{subfigure}[b]{0.49\textwidth}
    \includegraphics[width=\textwidth]{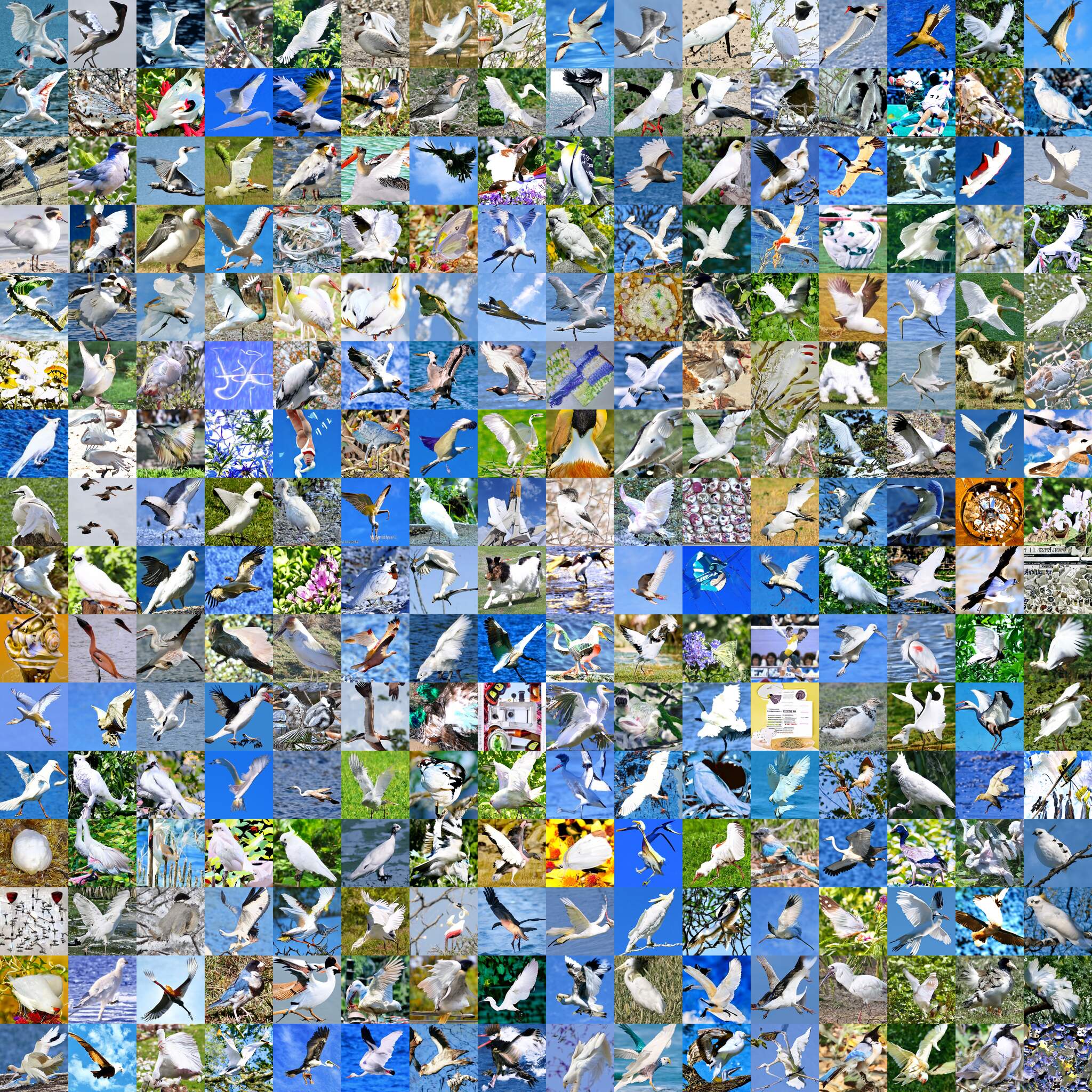}
    \caption{Fairy Tern (2.7\%)}
\end{subfigure}
\hfill
\begin{subfigure}[b]{0.49\textwidth}
    \includegraphics[width=\textwidth]{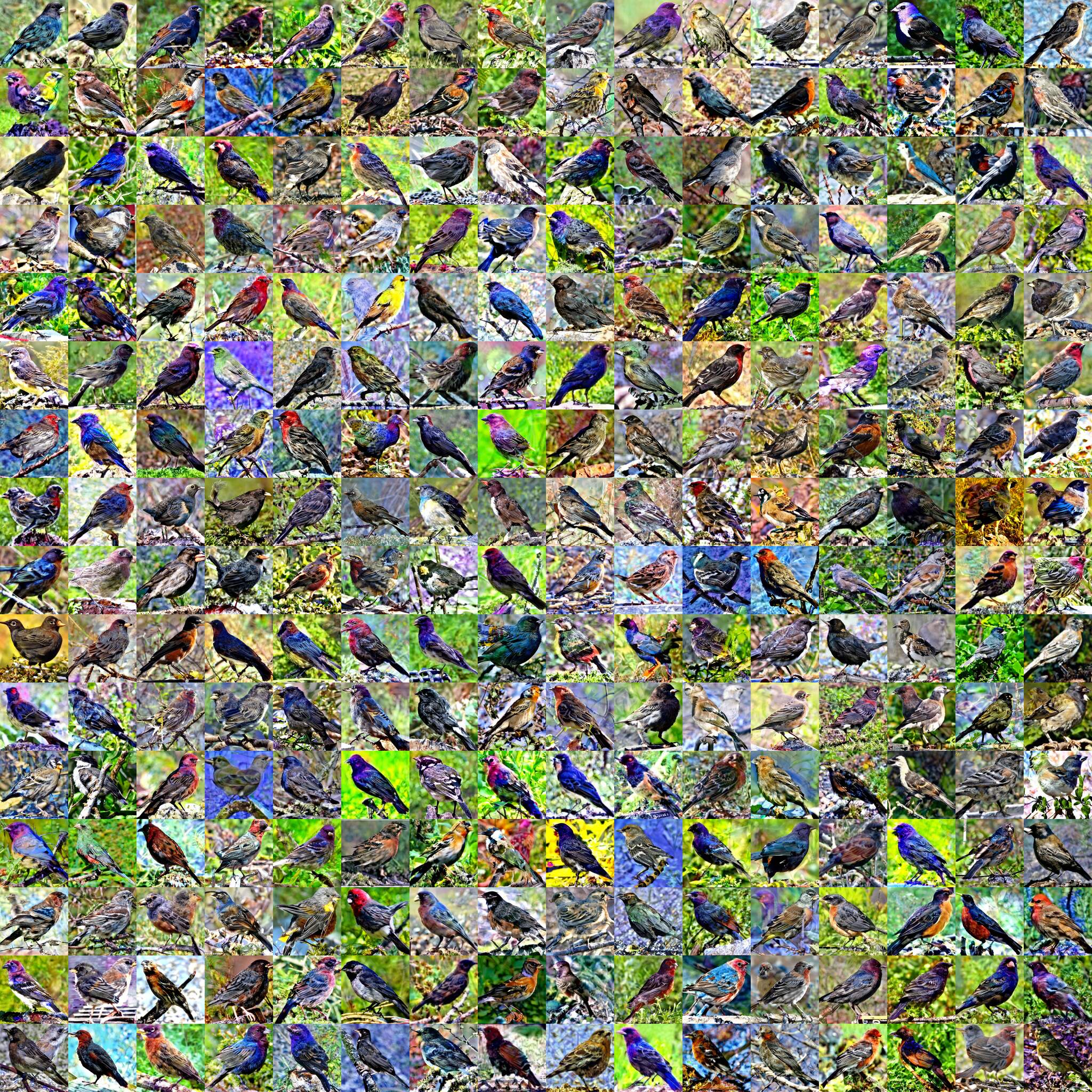}
    \caption{Brown Headed Cowbird (3.9\%)}
\end{subfigure}
\caption{\textbf{Fine-grained bird species generation samples (16$\times$16 grid per species).} Generated using deterministic DDIM sampling ($\eta=0$, 100 DDIM sampling steps). Percentages indicate target species accuracy evaluated on 256 samples per species using the Birds-525 classifier. The grids provide qualitative context for \cref{tab:birds}: for species absent from ImageNet, samples are often bird-like but may be classified as nearby species rather than the exact target.}
\label{fig:birds_samples_appendix}
\end{figure}

\subsection{ImageNet Label Guidance Visualizations}

We include qualitative ImageNet samples and classifier-confusion statistics for the four targets in \cref{tab:main_comparison}.
These figures are intended to complement the quantitative accuracy and FID numbers, not to replace them.

\begin{figure}[H]
\centering
\includegraphics[width=0.82\textwidth]{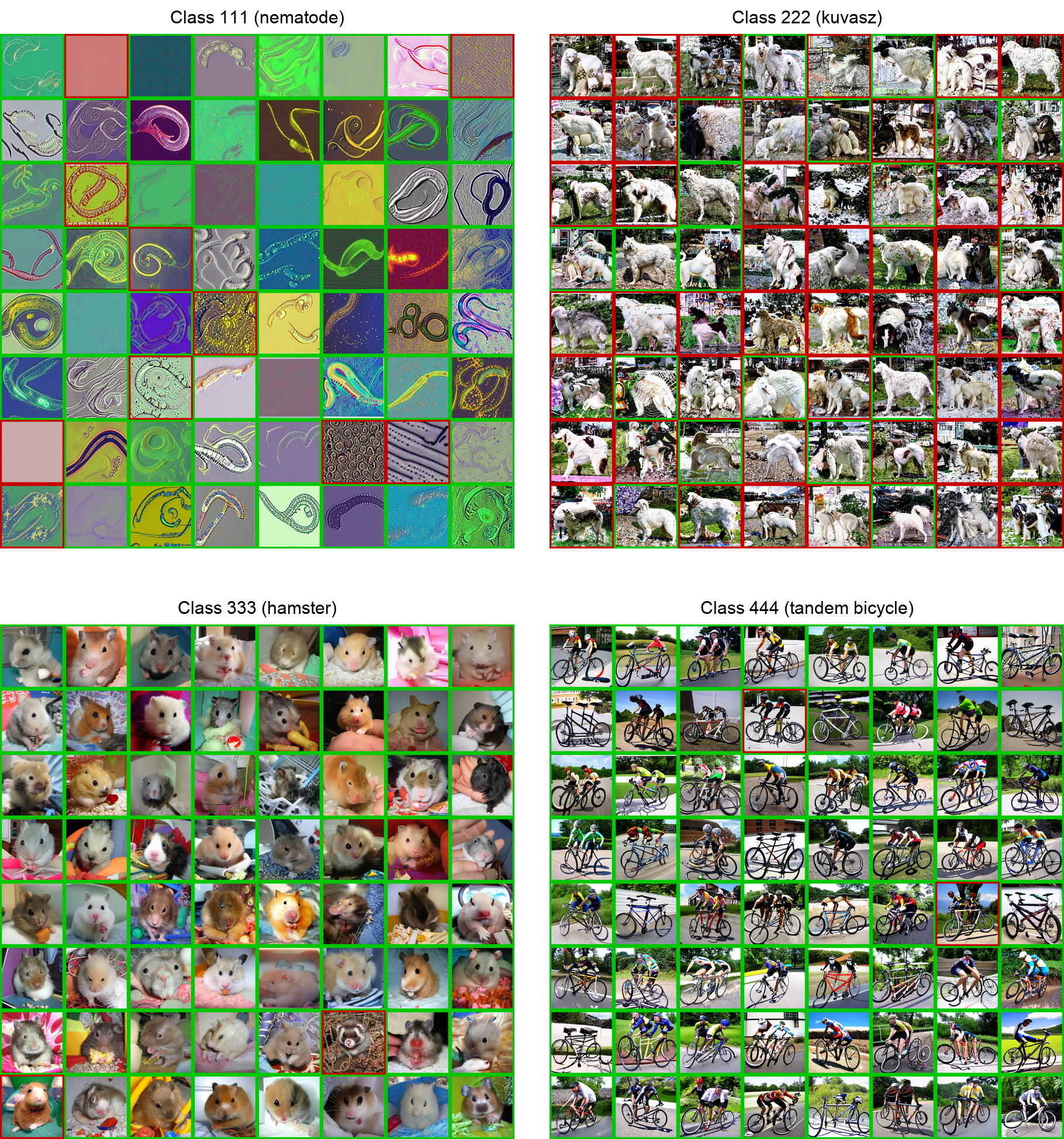}
\caption{\textbf{ImageNet label guidance sample grid.} Generated samples for four target classes (nematode, kuvasz, hamster, and tandem bicycle) using the settings reported in \cref{tab:imagenet_settings}. Each class shows an 8$\times$8 qualitative grid; green/red borders indicate correct/incorrect evaluation-classifier predictions. Per-class metrics are discussed in \cref{sec:imagenet} and reported in \cref{tab:imagenet_settings}; aggregate averages are in \cref{tab:main_comparison}.}
\label{fig:imagenet_samples}
\end{figure}

\begin{figure}[H]
\centering
\includegraphics[width=0.58\textwidth]{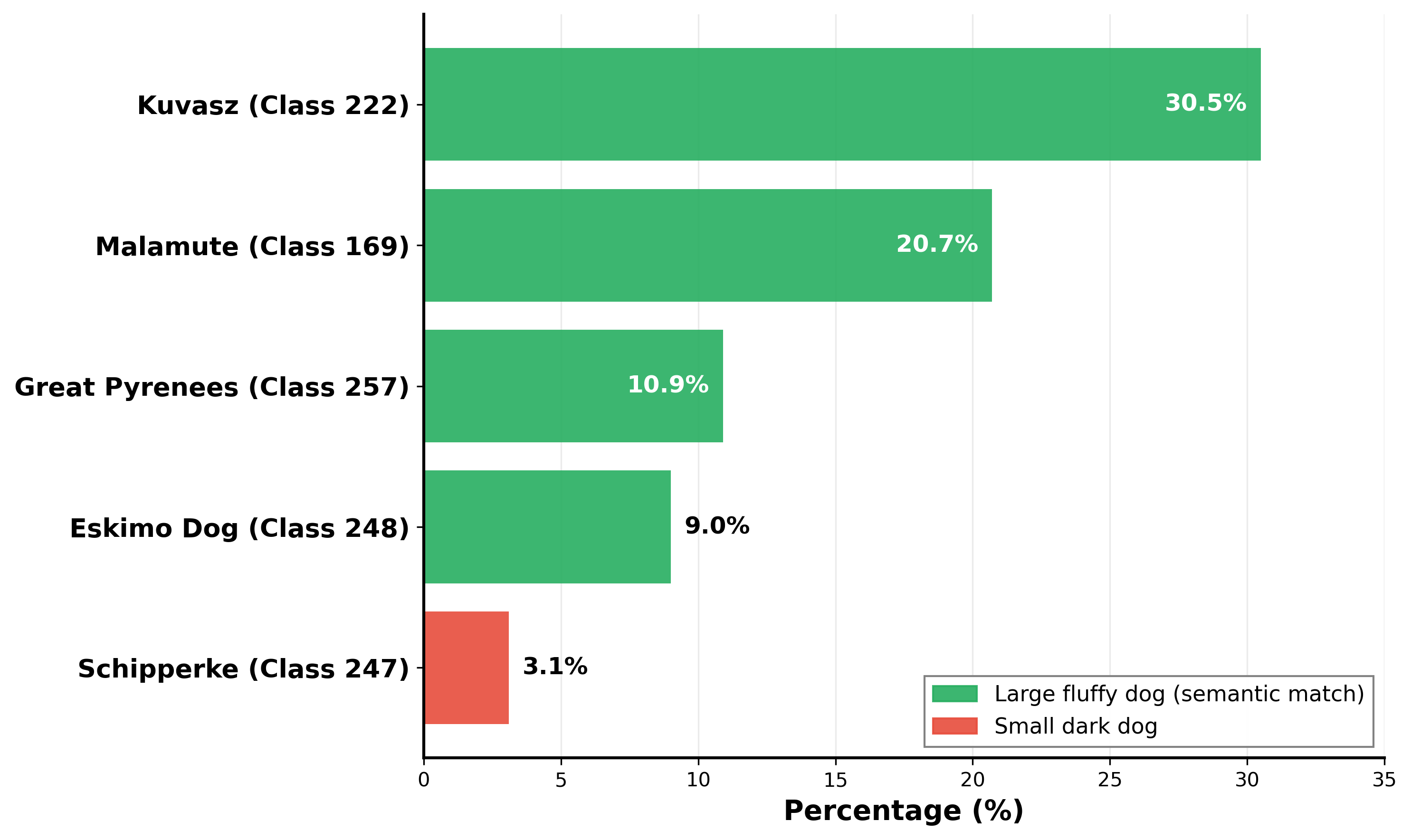}
\caption{\textbf{Class 222 (kuvasz) confusion analysis.} The leading predictions concentrate on large, light-colored dog breeds: kuvasz (30.5\%), malamute (20.7\%), Great Pyrenees (10.9\%), and Eskimo dog (9.0\%). The resulting top-5 accuracy is 74.2\%, indicating that many errors remain within a visually similar breed cluster even though exact top-1 kuvasz accuracy is lower.}
\label{fig:kuvasz_confusion}
\end{figure}

\subsection{Direction Learning Ablation: Difference-of-Means vs. RFM}
\label{app:direction_ablation}

We compare the learned RFM direction against a simpler baseline: a
difference-of-means direction
$\mathbf{d} = \mathbb{E}[\mathbf{h} | y=c] - \mathbb{E}[\mathbf{h}]$, where $\mathbf{h}$ denotes intermediate activations.
This baseline tests whether the difference-of-means direction is sufficient for
activation steering.

\begin{table}[h]
    \centering
    \caption{\textbf{Direction learning ablation on ImageNet kuvasz (class 222).} On this target, the RFM direction gives substantially higher top-1 accuracy than the difference-of-means direction. The last column summarizes the qualitative behavior visible in \cref{fig:direction_ablation}.}
    \label{tab:direction_ablation}
    
    \small
    \setlength{\tabcolsep}{4pt}
    \begin{tabular}{@{}p{0.34\linewidth}ccp{0.30\linewidth}@{}}
        \toprule
        \textbf{Method}     & \textbf{Accuracy} & \textbf{Label Confidence} & \textbf{Qualitative behavior} \\
        \midrule
        Difference-of-means direction & 6.2\% & 0.08 & Repeated standing profiles \\
        \textbf{RFM direction for activation steering (Ours)} & \textbf{30.5\%} & \textbf{0.24} & More varied dog poses \\
        \bottomrule
    \end{tabular}
\end{table}

\begin{figure}[h]
    \centering
    \includegraphics[width=0.5\textwidth]{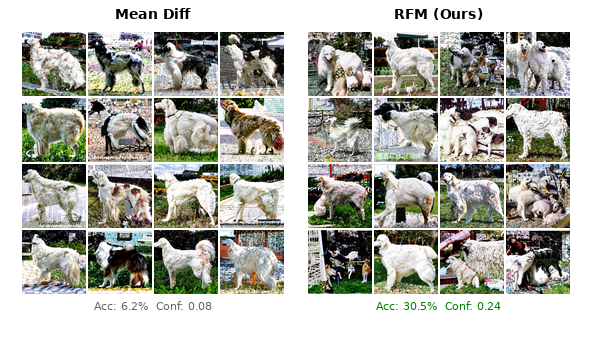}
    \caption{\textbf{Difference-of-means vs. RFM direction comparison on kuvasz (class 222).} \textit{Left:} the difference-of-means direction produces many samples with similar standing dog profiles. \textit{Right:} RFM guidance yields higher top-1 accuracy and a visibly broader set of poses in this grid.}
    \label{fig:direction_ablation}
\end{figure}

On this target, the difference-of-means direction is weak: it gives 6.2\%
top-1 accuracy and the displayed samples share a narrow standing-dog profile.
The RFM direction, learned from the target-vs-rest AGOP eigenspace, gives 30.5\% top-1 accuracy and produces a visibly broader set of poses in the same grid.
We therefore use RFM, rather than the difference-of-means direction, for direction discovery.
This is consistent with the simplified analysis in \appref{app:theory_transfer}, where the AGOP direction in a shared-covariance Gaussian model follows a covariance-weighted class-separation direction that remains stable under forward noising.

\section{Diversity Analysis and Classifier Calibration}
\label{app:diversity_calibration}

A natural concern for any guidance method that drives accuracy upward is whether this comes at the cost of \emph{diversity}--for example, whether the samples concentrate on a small set of easy-to-classify images. A related concern is whether the reported accuracy gains depend on the specific evaluation classifier. We address both here.

\subsection{Diversity: Generative Recall}
\label{app:recall}

Beyond FID, we measure diversity using the precision/recall metric of \citet{kynkaanniemi2019improved}, where recall estimates the fraction of the reference (real) distribution covered by the generated samples. We compare \methodshort against the same training-free TFG-4 baseline and the noise-conditioned classifier-guidance baseline on CIFAR-10, using a ConvNeXt-Tiny feature extractor on 2{,}048 generated samples per class.

\begin{table}[H]
    \centering
    \caption{\textbf{Accuracy, FID, and generative recall on CIFAR-10 label guidance.} \methodshort achieves the highest accuracy, the lowest FID, and the highest recall simultaneously. Recall is computed following~\citet{kynkaanniemi2019improved}.}
    \label{tab:recall}
    \small
    \begin{tabular}{@{}lccc@{}}
        \toprule
        \textbf{Method}                                   & \textbf{Accuracy $\uparrow$} & \textbf{FID $\downarrow$} & \textbf{Recall $\uparrow$} \\
        \midrule
        TFG-4~\citep{ye2024tfg}                           & 77.1\%                       & 73.9                      & 0.362                      \\
        Classifier Guidance~\citep{nichol2021improved}    & 86.0\%                       & 41.9                      & 0.430                      \\
        \textbf{\methodshort (Ours)}                      & \textbf{96.6\%}              & \textbf{41.4}             & \textbf{0.442}             \\
        \bottomrule
    \end{tabular}
\end{table}

\cref{tab:recall} shows that the accuracy gain is not accompanied by lower recall in this comparison:
\methodshort achieves the highest guidance accuracy and the highest recall among the three methods, while also improving FID.
Its recall is 22\% higher than the gradient-based TFG-4 baseline in relative terms and slightly exceeds that of noise-conditioned classifier guidance, indicating that the accuracy gain does not reduce coverage under this metric.

\subsection{Robustness to the Evaluation Classifier}
\label{app:classifier_calibration}

To assess evaluator dependence, we re-evaluate the same CIFAR-10 and ImageNet generations with additional, architecturally different classifiers.

\begin{table}[H]
    \centering
    \caption{\textbf{\methodshort accuracy under different evaluation classifiers.} Architectures span CNNs (ResNet56, ResNet-50, VGG19-BN, ConvNeXt-Base) and vision transformers (ConvNeXt-Tiny, DeiT-Small). The results are within $\pm 1$--$4$ points of one another on both CIFAR-10 and ImageNet, suggesting that the gains reported in \cref{tab:main_comparison} are not specific to one evaluator.}
    \label{tab:classifier_calibration}
    \small
    \begin{tabular}{@{}llc@{}}
        \toprule
        \textbf{Benchmark}        & \textbf{Evaluation classifier}                       & \textbf{\methodshort Accuracy} \\
        \midrule
        \multirow{3}{*}{CIFAR-10} & ConvNeXt-Tiny (reported in main)                     & 96.6\%                         \\
                                  & ResNet56                                             & 96.5\%                         \\
                                  & VGG19-BN                                             & 97.2\%                         \\
        \midrule
        \multirow{3}{*}{ImageNet} & DeiT-Small~\citep{touvron2021training} (reported)    & 75.8\%                         \\
                                  & ResNet-50                                            & 71.8\%                         \\
                                  & ConvNeXt-Base~\citep{liu2022convnet}                 & 75.5\%                         \\
        \bottomrule
    \end{tabular}
\end{table}

As shown in~\cref{tab:classifier_calibration}, \methodshort's accuracy is stable across classifiers: on CIFAR-10 we obtain $96.5$--$97.2\%$ across ConvNeXt-Tiny, ResNet56, and VGG19-BN; on ImageNet we obtain $71.8$--$75.8\%$ across DeiT-Small, ResNet-50, and ConvNeXt-Base. The main \cref{tab:main_comparison} uses the same evaluation classifiers as the TFG baseline~\citep{ye2024tfg} to ensure a fair like-for-like comparison.

\section{Extending \methodshort Beyond Unconditional U-Nets}
\label{app:extensions}

The main body of the paper evaluates \methodshort on unconditional
U-Net diffusion models. Here we include two extensions: a \emph{conditional}
text-to-image model (Stable Diffusion~1.5~\citep{rombach2022high}), where the
target visual attribute is supplied through examples rather than as a built-in
text condition, and a \emph{transformer-based} latent diffusion model
(SiT-XL/2~\citep{ma2024sit}), where the U-Net inductive bias is absent. Both
experiments keep the offline direction-learning and online activation-steering
structure of the main experiments.

\subsection{Steering Transformer-Based Latent Diffusion: SiT-XL/2}
\label{app:dit_sit}

We apply \methodshort to the official pretrained SiT-XL/2~\citep{ma2024sit}, a transformer-based latent diffusion model with 28 transformer blocks built on the DiT architecture~\citep{peebles2023dit}, using null-class conditioning as the unconditional baseline. We use a small class-specific set of middle transformer blocks; the row-wise sampling-time steering settings are reported separately in \cref{tab:sit_hparams}, and systematic transformer-specific block selection is left to future work.

\begin{table}[h]
    \centering
    \caption{\textbf{\methodshort on SiT-XL/2 (transformer-based latent diffusion).} We report per-class top-1 accuracy and FID for NA-RFM and for noise-alignment-only (PCA-based noise alignment without RFM activation steering), on the same 4 ImageNet classes as in the main body (256 samples per class). Adding RFM activation steering on top of noise alignment raises average accuracy from 12.9\% to 61.3\% and lowers FID from 220.9 to 151.8. The U-Net-based ADM TFG-4 baseline reports 59.8\% average accuracy.}
    \label{tab:sit_results}
    \small
    \setlength{\tabcolsep}{4pt}
    \begin{tabular}{@{}lcccc@{}}
        \toprule
                                     & \multicolumn{2}{c}{\textbf{\methodshort (NA+RFM)}} & \multicolumn{2}{c}{\textbf{Noise-align only}}                                    \\
        \cmidrule(lr){2-3} \cmidrule(lr){4-5}
        \textbf{Class}               & \textbf{Acc.\ $\uparrow$} & \textbf{FID $\downarrow$} & \textbf{Acc.\ $\uparrow$} & \textbf{FID $\downarrow$}        \\
        \midrule
        111 (nematode)               & 52.7\%                    & 202.4                     & 46.9\%                    & 197.0                            \\
        222 (kuvasz)                 & 28.5\%                    & 172.3                     & 2.7\%                     & 197.3                            \\
        333 (hamster)                & 81.6\%                    & 129.1                     & 1.6\%                     & 227.7                            \\
        444 (tandem bicycle)         & 82.4\%                    & 103.5                     & 0.4\%                     & 261.8                            \\
        \midrule
        \textbf{Average}             & \textbf{61.3\%}           & \textbf{151.8}            & 12.9\%                    & 220.9                            \\
        \bottomrule
    \end{tabular}
\end{table}

\begin{table}[h]
    \centering
    \caption{\textbf{SiT-XL/2 row-wise sampling-time steering settings for \cref{tab:sit_results}.} Noise alignment is active for $\sigma_t>\sigma_{\mathrm{end}}$ in the model's native Karras/EDM-style schedule; RFM steering is active on the listed native-$\sigma$ interval. The RFM coefficient is applied to each steered transformer block. Common settings are 100 sampling steps, null-class conditioning as the unconditional branch, and 256 samples per class.}
    \label{tab:sit_hparams}
    \small
    
    \resizebox{\linewidth}{!}{%
    \begin{tabular}{@{}lccclcl@{}}
        \toprule
        \textbf{Class} & $\boldsymbol{\lambda}$ & $\boldsymbol{w_{\mathrm{RFM}}}$ & $\boldsymbol{s}$ & $\boldsymbol{\sigma_{\mathrm{end}}}$ & $\boldsymbol{\sigma_t \in [\sigma_{\mathrm{R}}^{\mathrm{lo}},\sigma_{\mathrm{R}}^{\mathrm{hi}}]}$ & \textbf{RFM steering blocks} \\
        \midrule
        111 (nematode)       & 2.0 & 0.03/block  & 6.0 & 25 & $[0.0026,17.53]$ & 16, 20 \\
        222 (kuvasz)         & 3.0 & 0.025/block & 4.0 & 40 & $[0.0026,5.84]$  & 12, 16, 20 \\
        333 (hamster)        & 2.0 & 0.03/block  & 2.0 & 40 & $[0.0026,5.84]$  & 16, 18, 20 \\
        444 (tandem bicycle) & 3.0 & 0.04/block  & 6.0 & 15 & $[0.0026,17.53]$ & 16, 20 \\
        \bottomrule
    \end{tabular}}
\end{table}

\cref{tab:sit_results} reports the same two components on
SiT-XL/2. Noise alignment alone reaches 46.9\% on nematode but is weak on the
other three targets. Adding RFM activation steering raises average accuracy
from 12.9\% to 61.3\% and improves FID from 220.9 to 151.8. These numbers place
the SiT-XL/2 run in the same accuracy range as the U-Net TFG-4 baseline reported
in \citet{ye2024tfg}, while using a different backbone and a small
class-specific block set. Qualitative samples are shown in
\cref{fig:sit_results}.

\begin{figure}[h]
    \centering
    \includegraphics[width=0.82\textwidth]{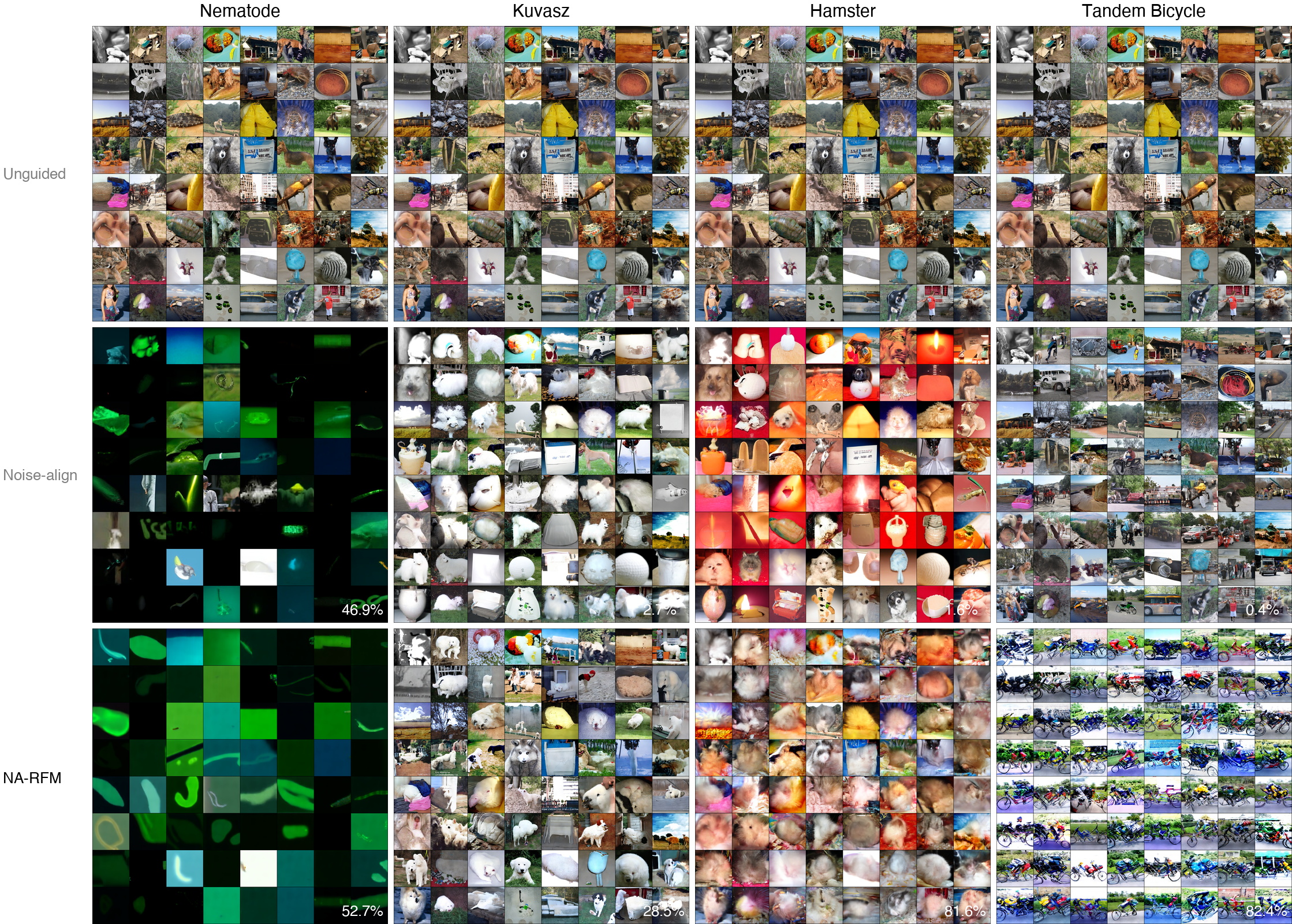}
    \caption{\textbf{\methodshort on SiT-XL/2.} Columns correspond to the four ImageNet targets; rows show unguided sampling, noise-alignment-only guidance, and full \methodshort. Adding RFM activation steering gives more recognizable samples for \textit{kuvasz}, \textit{hamster}, and \textit{tandem bicycle} in this qualitative grid.}
    \label{fig:sit_results}
\end{figure}

These extensions keep the implementation structure close to the U-Net
experiments: choose a layer, learn an additive direction from examples, and
apply that direction during sampling.

\subsection{Steering Conditional Models: Depth-of-Field on Stable Diffusion 1.5}
\label{app:sd_dof}

\paragraph{Motivation.}
A text-conditional diffusion model like Stable Diffusion~1.5 is
trained on (image, caption) pairs, so it can respond to caption-level conditions
but does not provide direct controls for purely visual attributes that are not
routinely described in captions. Depth-of-field (DoF) -- how sharply the
foreground is separated from a blurred background -- is one such attribute. We
use it to test whether \methodshort can steer a conditional model toward a
visual attribute specified by examples rather than by the prompt interface.

\paragraph{Direction Learning.}
We use the \texttt{svnfs/depth-of-field} dataset from Hugging Face, which provides binary shallow/deep DoF labels across approximately 600 real images per class.
We extract activations from \texttt{unet.down\_blocks[2].resnets[-1]} of the SD~1.5 U-Net at forward-noise level $\sigma \approx 0.344$, flattening the activations into a 327,680-dimensional representation. We then train an RFM direction with bandwidth 5000 on the shallow-vs.-deep labels.

\paragraph{Evaluation Metric.}
We estimate per-pixel depth with Depth~Anything~V2~\citep{yang2024depthv2}, segment each generated image into foreground (top 40\% depth) and background (bottom 60\%), compute the Laplacian variance on each region as a sharpness proxy, and report the foreground-to-background sharpness ratio. A higher ratio indicates a sharper foreground against a blurred background, matching the shallow-DoF target.

\begin{figure}[htbp]
    \centering
    \includegraphics[width=0.5\textwidth]{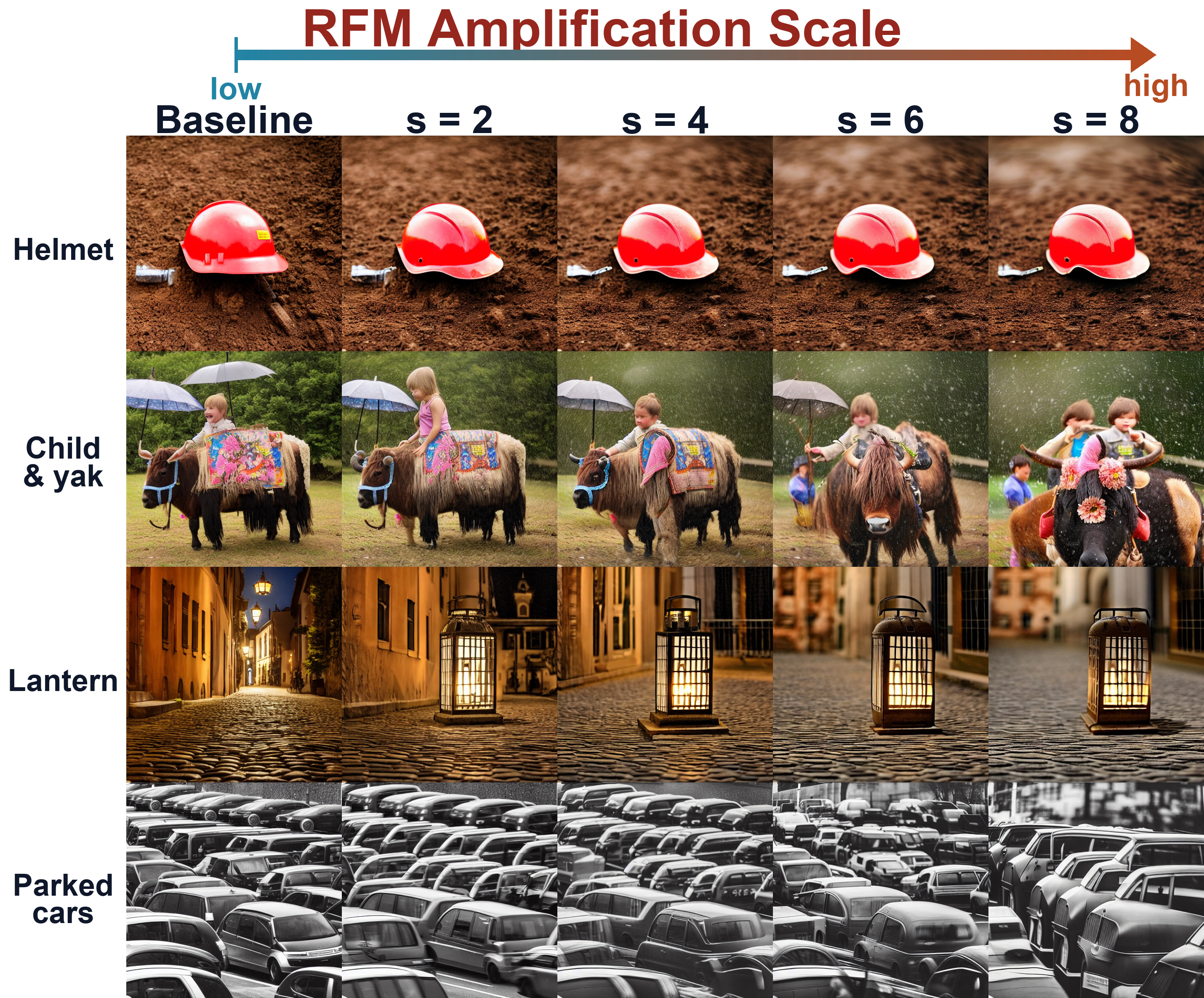}
    \caption{\textbf{Depth-of-field steering on Stable Diffusion~1.5.} For each prompt (rows), we sweep the \methodshort steering strength from $0$ (left, baseline) to its maximum (right). The displayed sweep uses increasing RFM amplification scale $s$. As the strength increases, the foreground subject often remains recognizable while the background becomes progressively more blurred, consistent with a shallow-DoF target.}
    \label{fig:dof_grid}
\end{figure}

\paragraph{Setup and Results.}
We take the first 25 COCO-Karpathy validation prompts and generate 4 images per prompt (100 images total) with and without NA-RFM steering. Steering \emph{toward} shallow DoF produces, on average:
\begin{itemize}\denselist
    \item background sharpness reduced by \textbf{27.6\%};
    \item foreground-to-background sharpness ratio increased from \textbf{2.25} (unsteered) to \textbf{2.58} (steered).
\end{itemize}
The quantitative run uses 50 DDIM sampling steps over $0.041 \leq \sigma_t \leq 13.12$, text CFG scale 7.5, RFM coefficient $w_{\mathrm{RFM}}=0.7$, and RFM amplification scale $s=4.0$ in the three-branch prediction
\[
\epsilon = \epsilon_u + 7.5(\epsilon_c-\epsilon_u) + 4.0(\epsilon_s-\epsilon_c),
\]
where $\epsilon_s$ is the conditional prediction with the RFM steering layer active. The coefficient 7.5 is the text-CFG scale, while the last term is the separate RFM amplification term. The four seeds are 42, 123, 256, and 789 for each prompt.
Across prompts, steering often keeps the main subject recognizable while
increasing background blur; \cref{fig:dof_grid} shows qualitative examples. These results
suggest that \methodshort can steer a \emph{conditional} model toward an
example-defined visual attribute, without retraining and without inference-time
gradients.

This experiment shows that \methodshort can learn guidance signals from examples for those visual properties that are difficult to control precisely with prompts.

\section{Implementation Details}
\label{app:implementation}

We provide detailed implementation specifications for reproducibility. Unless otherwise noted, the main U-Net experiments of \methodshort use deterministic DDIM sampling with 100 sampling steps; RFM-active steps add a second steered denoiser forward pass.

We report effective guidance parameters throughout: $\lambda$ is the noise-alignment coefficient, $w_{\mathrm{RFM}}$ is the RFM steering coefficient, $s$ is the RFM amplification scale, and inactive components are denoted by $0$ or ``--''.

\subsection{Noise-Level Reporting}
\label{app:noise_level_reporting}

We report guidance windows by the noise parameter $\sigma_t$ from \cref{eq:noise_to_signal_ratio}. For VP/DDPM-style schedules this is the noise-to-signal ratio
\[
    \sigma_t = \sqrt{\frac{1-\bar{\alpha}_t}{\bar{\alpha}_t}},
\]
where $\bar{\alpha}_t$ is the cumulative product of the per-step VP signal factors; equivalently, with $\bar{\beta}_t:=1-\bar{\alpha}_t$, $\sigma_t=\sqrt{\bar{\beta}_t/\bar{\alpha}_t}$. Thus the reporting convention applies to any VP/DDPM variance schedule, not only a particular choice. For EDM/Karras-style samplers, $\sigma_t$ is the scheduler's native noise level. For SiT/DiT-style runs, we report the native noise variable used by that scheduler rather than converting it with the VP/DDPM $\bar{\alpha}_t$ formula. We report noise alignment by the cutoff $\sigma_{\mathrm{end}}$, active when $\sigma_t \geq \sigma_{\mathrm{end}}$, and RFM steering by the interval $\sigma_t \in [\sigma_{\mathrm{R}}^{\mathrm{lo}},\sigma_{\mathrm{R}}^{\mathrm{hi}}]$. CIFAR-10 and the ADM ImageNet/Birds runs use a VP/DDPM schedule with $T=1000$ training timesteps and linear per-step variances $\delta_i \in [0.0001,0.02]$; the CIFAR-10 RFM collection level is $\sigma \approx 0.21$, $\sigma_{\mathrm{end}}=3.33$, and the RFM window corresponding to steps 30--99 is $[0.01,11.69]$. For Birds-525, RFM steering is applied over the full 100-step DDIM window, reported as $[0.01,157.4]$. The Stable Diffusion~1.5 DoF run uses a scaled-linear variance schedule with 50 DDIM sampling steps; its direction-extraction level is $\sigma \approx 0.344$ and its full RFM window is $[0.041,13.12]$. CelebA-HQ and ImageNet rows in the implementation tables below are already specified directly in $\sigma$ units.

\subsection{Model Checkpoints and Architecture}

\begin{table}[H]
    \centering
    \caption{\textbf{Diffusion model checkpoints and architectures.}}
    \label{tab:checkpoints}
    \small
    \begin{tabular}{@{}llp{8cm}@{}}
        \toprule
        \textbf{Dataset} & \textbf{Architecture}      & \textbf{Checkpoint URL}                                                                              \\
        \midrule
        CIFAR-10         & Improved DDPM U-Net        & \url{https://openaipublic.blob.core.windows.net/diffusion/march-2021-ema/cifar10_uncond_50M_500K.pt} \\
        ImageNet         & ADM U-Net (256$\times$256) & \url{https://openaipublic.blob.core.windows.net/diffusion/jul-2021/256x256_diffusion_uncond.pt}      \\
        CelebA-HQ        & DDPM U-Net                 & \url{https://huggingface.co/google/ddpm-ema-celebahq-256}                               \\
        \bottomrule
    \end{tabular}
\end{table}

The ImageNet ADM architecture uses: 256 base channels, channel multipliers [1, 1, 2, 2, 4, 4], 2 residual blocks per resolution, attention at 32$\times$32, 16$\times$16, and 8$\times$8 resolutions, 64 channels per attention head, and learned sigma prediction.

\subsection{Guidance and Evaluation Classifiers}

\begin{table}[H]
    \centering
    \caption{\textbf{Auxiliary classifiers} used for pseudo-labeling, baseline compatibility, or guidance/evaluation protocols.}
    \label{tab:guidance_classifiers}
    \small
    \begin{tabular}{@{}llp{7cm}@{}}
        \toprule
        \textbf{Dataset} & \textbf{Architecture} & \textbf{Source}                                                  \\
        \midrule
        CIFAR-10         & ResNet-18             & OpenOOD benchmark                                                \\
        ImageNet         & ViT-B/16              & torchvision pretrained                                           \\
        CelebA (Age)     & ViT                   & \url{https://huggingface.co/nateraw/vit-age-classifier}          \\
        CelebA (Gender)  & ViT                   & \url{https://huggingface.co/rizvandwiki/gender-classification-2} \\
        CelebA (Hair)    & ViT                   & \url{https://huggingface.co/enzostvs/hair-color}                 \\
        Birds-525        & EfficientNet          & \url{https://huggingface.co/chriamue/bird-species-classifier}    \\
        \bottomrule
    \end{tabular}
\end{table}

\begin{table}[H]
    \centering
    \caption{\textbf{Evaluation classifiers} (used for evaluating guidance accuracy).}
    \label{tab:eval_classifiers}
    \small
    \begin{tabular}{@{}llp{7cm}@{}}
        \toprule
        \textbf{Dataset} & \textbf{Architecture} & \textbf{Source}                                                         \\
        \midrule
        CIFAR-10         & ConvNeXT-Tiny         & \url{https://huggingface.co/ahsanjavid/convnext-tiny-finetuned-cifar10} \\
        ImageNet         & DeiT-Small            & \url{https://huggingface.co/facebook/deit-small-patch16-224}            \\
        CelebA (Age)     & Swin                  & \url{https://huggingface.co/ibombonato/swin-age-classifier}             \\
        CelebA (Gender)  & ViT                   & \url{https://huggingface.co/rizvandwiki/gender-classification}          \\
        CelebA (Hair)    & ViT                   & \url{https://huggingface.co/londe33/hair_v02}                           \\
        Birds-525        & EfficientNet-B2       & \url{https://huggingface.co/dennisjooo/Birds-Classifier-EfficientNetB2} \\
        \bottomrule
    \end{tabular}
\end{table}

\subsection{Training Data for Direction Discovery}

\begin{table}[H]
    \centering
    \caption{\textbf{Training data} used for computing PCA statistics and RFM directions.}
    \label{tab:training_data}
    \small
    \begin{tabular}{@{}llp{7cm}@{}}
        \toprule
        \textbf{Dataset} & \textbf{Size} & \textbf{Source}                                                     \\
        \midrule
        CIFAR-10         & 50,000 images & \texttt{torchvision.datasets.CIFAR10}                               \\
        ImageNet-1k      & 1.28M images  & \url{https://huggingface.co/datasets/imagenet-1k}                   \\
        CelebA-HQ        & 30,000 images & \url{https://github.com/tkarras/progressive_growing_of_gans}        \\
        Birds-525        & 89,885 images & \url{https://huggingface.co/datasets/chriamue/bird-species-dataset} \\
        \bottomrule
    \end{tabular}
\end{table}

\subsection{CIFAR-10 Implementation}

\begin{table}[H]
    \centering
    \caption{\textbf{CIFAR-10 implementation details.}}
    \label{tab:cifar10_impl}
    \small
    \begin{tabular}{@{}p{0.24\textwidth}p{0.58\textwidth}@{}}
        \toprule
        \textbf{Parameter}       & \textbf{Value}                                \\
        \midrule
        \multicolumn{2}{@{}l}{\textit{Diffusion Model}}                          \\
        Architecture             & OpenAI U-Net (improved DDPM)                  \\
        Image resolution         & 32$\times$32                                  \\
        Noise schedule           & Linear $\beta \in [0.0001, 0.02]$, 1000 training timesteps \\
        \midrule
        \multicolumn{2}{@{}l}{\textit{Activation Collection}}                    \\
        RFM steering layer & \texttt{input\_blocks\_9}      \\
        Feature map resolution   & 8$\times$8                                    \\
        Collection noise level & $\sigma \approx 0.21$ \\
        Samples per class        & 1,000                                         \\
        Training data            & CIFAR-10 train split (50,000 images)          \\
        \midrule
        \multicolumn{2}{@{}l}{\textit{RFM Training}}                             \\
        Kernel                   & Laplace                                       \\
        Bandwidth                & 100                                           \\
        Regularization           & $10^{-3}$                                     \\
        Iterations               & 5                                             \\
        Top-$k$ eigenvectors     & 3                                             \\
        \midrule
        \multicolumn{2}{@{}l}{\textit{Guidance Settings}}                        \\
        RFM amplification scale $s$ & 2.0              \\
        RFM window $\sigma_t \in [\sigma_{\mathrm{R}}^{\mathrm{lo}},\sigma_{\mathrm{R}}^{\mathrm{hi}}]$ & $[0.01,11.69]$ \\
        RFM coefficient $w_{\text{RFM}}$ & 1.0       \\
        Noise-alignment cutoff $\sigma_{\mathrm{end}}$ & 3.33 \\
        Noise alignment coefficient $\lambda$ & 3.0    \\
        \bottomrule
    \end{tabular}
\end{table}

\begin{table}[H]
    \centering
    \caption{\textbf{CIFAR-10 guidance settings used for the noise-alignment and classifier-guidance comparisons.} The \methodshort row is repeated only to make the comparison with the noise-alignment-only and classifier-guidance baselines explicit.}
    \label{tab:cifar10_guidance_settings}
    \footnotesize
    
    \begin{tabular}{@{}lcccp{0.42\linewidth}p{0.19\linewidth}@{}}
        \toprule
        \textbf{Method} & $\boldsymbol{\lambda}$ & $\boldsymbol{w_{\mathrm{RFM}}}$ & $\boldsymbol{s}$ & \textbf{Sampler / guidance setting} & \textbf{Reported metrics} \\
        \midrule
        Noise alignment only & 8.0 & 0 & -- & DDIM, $\eta=0$; $\sigma_{\mathrm{end}}=3.33$; no RFM steering & 80.0\% acc., 120 FID \\
        \methodshort & 3.0 & 1.0 & 2.0 & DDIM, $\eta=0$; $\sigma_{\mathrm{end}}=3.33$; RFM window $[0.01,11.69]$ at \texttt{input\_blocks\_9} & 96.6\% acc., 41.4 FID \\
        Classifier guidance & -- & -- & -- & TFG CIFAR-10 script; classifier-guidance coefficient 5.0; DDIM, $\eta=1$ & 86.0\% acc. \\
        \bottomrule
    \end{tabular}
\end{table}

\subsection{ImageNet Implementation}

\begin{table}[H]
    \centering
    \caption{\textbf{ImageNet implementation details.}}
    \label{tab:imagenet_impl}
    \small
    \begin{tabular}{@{}p{0.24\textwidth}p{0.58\textwidth}@{}}
        \toprule
        \textbf{Parameter}     & \textbf{Value}                                           \\
        \midrule
        \multicolumn{2}{@{}l}{\textit{Diffusion Model}}                                   \\
        Architecture           & ADM U-Net (unconditional)                                \\
        Image resolution       & 256$\times$256                                           \\
        Noise schedule & Linear $\beta \in [0.0001, 0.02]$, 1000 training timesteps \\
        \midrule
        \multicolumn{2}{@{}l}{\textit{Activation Collection}}                             \\
        RFM steering layer & \texttt{input\_blocks\_15}                 \\
        Feature map resolution & 8$\times$8                                               \\
        Feature channels       & 1024                                                     \\
        Collection noise level & $\sigma \approx 0.59$                                     \\
        Training data          & ImageNet-1k train split; 1{,}300 target-class images and the remaining 8{,}900 non-target images in the consolidated activation set for each binary RFM task                            \\
        \midrule
        \multicolumn{2}{@{}l}{\textit{RFM Training}}                                      \\
        Kernel                 & Laplace                                                   \\
        Bandwidth              & 200                                                      \\
        Regularization         & $10^{-4}$                                                \\
        Iterations             & 5                                                        \\
        Top-$k$ eigenvectors   & 50 saved; first signed eigenvector used for steering                                           \\
        \bottomrule
    \end{tabular}
\end{table}

\begin{table}[H]
    \centering
    \caption{\textbf{ImageNet effective guidance settings.}}
    \label{tab:imagenet_settings}
    \small
    
    \resizebox{\linewidth}{!}{%
    \begin{tabular}{@{}lccclllcl@{}}
        \toprule
        \textbf{Target} & $\boldsymbol{\lambda}$ & $\boldsymbol{w_{\mathrm{RFM}}}$ & $\boldsymbol{s}$ & $\boldsymbol{\sigma_{\mathrm{end}}}$ & $\boldsymbol{\sigma_t \in [\sigma_{\mathrm{R}}^{\mathrm{lo}},\sigma_{\mathrm{R}}^{\mathrm{hi}}]}$ & \textbf{RFM steering layer} & \textbf{Top-$k$} & \textbf{Metric} \\
        \midrule
        111, nematode & 2.0 & 0.5 & 12.0 & 3.0 & $[0,30.0)$ & \texttt{input\_blocks\_15} & 1 & 83.2\% acc., 138.5 FID \\
        222, kuvasz & 2.0 & 1.0 & 8.0 & 2.0 & $[0,80.0)$ & \texttt{input\_blocks\_15} & 1 & 30.5\% top-1, 74.2\% top-5, 138.4 FID \\
        333, hamster & 1.0 & 0.5 & 4.0 & 10.0 & $[0,80.0)$ & \texttt{input\_blocks\_15} & 1 & 91.8\% acc., 56.3 FID \\
        444, tandem bicycle & 0 & 1.0 & 12.0 & -- & $[0,80.0)$ & \texttt{input\_blocks\_15} & 1 & 97.7\% acc., 53.9 FID \\
        \bottomrule
    \end{tabular}}
\end{table}

\subsection{CelebA-HQ Implementation}

\begin{table}[H]
    \centering
    \caption{\textbf{CelebA-HQ implementation details.}}
    \label{tab:celeba_impl}
    \small
    \begin{tabular}{@{}ll@{}}
        \toprule
        \textbf{Parameter}     & \textbf{Value}                           \\
        \midrule
        \multicolumn{2}{@{}l}{\textit{Diffusion Model}}                   \\
        Architecture           & DDPM U-Net                               \\
        Image resolution       & 256$\times$256                           \\
        Model checkpoint & \texttt{google/ddpm-ema-celebahq-256} \\
        \midrule
        \multicolumn{2}{@{}l}{\textit{Activation Collection}}             \\
        RFM steering layer & \texttt{mid\_block}       \\
        Feature map resolution & 8$\times$8                               \\
        Collection noise level & $\sigma \approx 1.0$                     \\
        Training data          & CelebA-HQ 256$\times$256 (30,000 images) \\
        \midrule
        \multicolumn{2}{@{}l}{\textit{Attribute-Specific Training Data}}  \\
        Gender                 & Female: 18k, Male: 10k                   \\
        Age                    & Young: 20k, Old: 1.9k                    \\
        Hair color             & Black: 8k, Blond: 9k                     \\
        \midrule
        \multicolumn{2}{@{}l}{\textit{RFM Training}}                      \\
        Kernel                 & Laplace                                  \\
        Bandwidth              & 150                                      \\
        Regularization         & $10^{-3}$                                \\
        Iterations             & 5                                        \\
        Top-$k$ eigenvectors   & 5                                        \\
        \bottomrule
        \end{tabular}
        \end{table}
        
        \begin{table}[H]
        \centering
        \caption{\textbf{CelebA-HQ effective guidance settings.} Attribute order follows the row name; for example, in Female + Non-Blond, $\lambda_1,w_{\mathrm{RFM}}^{(1)}$ are for Female and $\lambda_2,w_{\mathrm{RFM}}^{(2)}$ are for Non-Blond. The $\sigma_{\mathrm{end}}$ column is the noise-alignment cutoff; the $\sigma_t \in [\sigma_{\mathrm{R}}^{\mathrm{lo}},\sigma_{\mathrm{R}}^{\mathrm{hi}}]$ column gives the RFM/CFG steering window.}
        \label{tab:celeba_multipliers}
        \small
        
        \resizebox{\linewidth}{!}{%
        \begin{tabular}{@{}lrrrrrlllrrrr@{}}
        \toprule
        \textbf{Combination} & $\boldsymbol{\lambda_1}$ & $\boldsymbol{w_{\mathrm{RFM}}^{(1)}}$ & $\boldsymbol{\lambda_2}$ & $\boldsymbol{w_{\mathrm{RFM}}^{(2)}}$ & $\boldsymbol{s}$ & $\boldsymbol{\sigma_{\mathrm{end}}}$ & $\boldsymbol{\sigma_t \in [\sigma_{\mathrm{R}}^{\mathrm{lo}},\sigma_{\mathrm{R}}^{\mathrm{hi}}]}$ & \textbf{RFM steering layer} & \textbf{Top-$k$} & $\boldsymbol{N}$ & \textbf{Acc.} & \textbf{log-KID} \\
        \midrule
        \multicolumn{13}{@{}l}{\textit{Gender + Hair Color}} \\
        Female + Non-Blond & +2.0 & -0.20 & -2.0 & -0.40 & 2.0 & 3.5 & $[0,3.5)$ & \texttt{mid\_block} & 5 & 256 & 86.4 & -2.8688 \\
        Female + Blond & +2.0 & -0.20 & +2.0 & +0.40 & 2.0 & 3.5 & $[0,3.5)$ & \texttt{mid\_block} & 5 & 256 & 80.1 & -2.9751 \\
        Male + Non-Blond & +2.0 & 0 & -2.0 & 0 & -- & 3.5 & -- & -- & -- & 256 & 98.4 & -1.8124 \\
        Male + Blond & +1.5 & +0.40 & +1.5 & +0.40 & 2.0 & 3.5 & $[0,3.5)$ & \texttt{mid\_block} & 5 & 256 & 68.4 & -1.7982 \\
        \midrule
        \multicolumn{13}{@{}l}{\textit{Gender + Age}} \\
        Young + Female & 0 & +0.24 & 0 & -0.15 & 2.0 & -- & $[0.5,\infty)$ & \texttt{mid\_block} & 5 & 256 & 100.0 & -3.1604 \\
        Old + Female & +2.0 & +0.40 & +2.0 & -0.20 & 2.0 & 3.5 & $[0,3.5)$ & \texttt{mid\_block} & 5 & 256 & 85.2 & -1.8920 \\
        Young + Male & +2.0 & +0.32 & +2.0 & +0.40 & 2.0 & 3.5 & $[0,3.5)$ & \texttt{mid\_block} & 5 & 256 & 98.8 & -1.9391 \\
        Old + Male & 0 & +0.30 & 0 & +0.30 & 2.0 & -- & $[0.5,\infty)$ & \texttt{mid\_block} & 5 & 256 & 100.0 & -1.1736 \\
        \bottomrule
    \end{tabular}}
\end{table}

\subsection{Fine-Grained Bird Species Implementation}

\begin{table}[H]
\centering
\caption{\textbf{Birds-525 fine-grained implementation details.}}
\label{tab:birds_impl}
\small
\begin{tabular}{@{}ll@{}}
    \toprule
    \textbf{Parameter}     & \textbf{Value}                                   \\
    \midrule
    \multicolumn{2}{@{}l}{\textit{Diffusion Model}}                           \\
    Architecture           & ADM U-Net (same as ImageNet)                     \\
    Image resolution       & 256$\times$256                                   \\
    \midrule
    \multicolumn{2}{@{}l}{\textit{Activation Collection}}                     \\
    RFM steering layer & \texttt{input\_blocks\_15}         \\
    Feature map resolution & 8$\times$8                                       \\
    Collection noise level & $\sigma \approx 0.60$                            \\
    Training data          & Birds-525 (Hugging Face)~160 images per class     \\
    \midrule
    \multicolumn{2}{@{}l}{\textit{RFM Training}}                              \\
    Kernel                 & Laplace                                          \\
    Bandwidth              & 200                                              \\
    Regularization         & $10^{-4}$                                        \\
    Iterations             & 5                                                \\
    Top-$k$ eigenvectors   & 1                                                \\
    \midrule
    \multicolumn{2}{@{}l}{\textit{Target Species}}                            \\
    Lucifer Hummingbird    & Partial ImageNet overlap (``hummingbird'' class) \\
    Scarlet Macaw          & Not in ImageNet                                  \\
    Fairy Tern             & Not in ImageNet                                  \\
    Brown Headed Cowbird   & Not in ImageNet                                  \\
    \midrule
    \multicolumn{2}{@{}l}{\textit{Sampling}}                                  \\
    Sampler                & DDIM                                             \\
    DDIM sampling steps    & 100                                              \\
    $\eta$                 & 0.0                                              \\
    RFM window $\sigma_t \in [\sigma_{\mathrm{R}}^{\mathrm{lo}},\sigma_{\mathrm{R}}^{\mathrm{hi}}]$ & $[0.01,157.4]$ \\
    Samples per evaluation & 256                                              \\
    \bottomrule
    \end{tabular}
    \end{table}
    
    \begin{table}[H]
    \centering
    \caption{\textbf{Birds-525 effective guidance settings.}}
    \label{tab:birds_species_settings}
    \small
    
    \resizebox{\linewidth}{!}{%
    \begin{tabular}{@{}lccclllcl@{}}
    \toprule
    \textbf{Species} & $\boldsymbol{\lambda}$ & $\boldsymbol{w_{\mathrm{RFM}}}$ & $\boldsymbol{s}$ & $\boldsymbol{\sigma_{\mathrm{end}}}$ & $\boldsymbol{\sigma_t \in [\sigma_{\mathrm{R}}^{\mathrm{lo}},\sigma_{\mathrm{R}}^{\mathrm{hi}}]}$ & \textbf{RFM steering layer} & \textbf{Top-$k$} & \textbf{Metric} \\
    \midrule
    Lucifer Hummingbird  & 3.0 & 0.7 & 5.0 & 10.0 & $[0.01,157.4]$ & \texttt{input\_blocks\_15} & 1 & 21.5\% acc., 24.76 FID \\
    Scarlet Macaw        & 3.0 & 0.3 & 5.0 & 10.0 & $[0.01,157.4]$ & \texttt{input\_blocks\_15} & 1 & 28.1\% acc., 104.1 FID \\
    Fairy Tern           & 2.0 & 0.3 & 3.0 & 10.0 & $[0.01,157.4]$ & \texttt{input\_blocks\_15} & 1 & 2.7\% acc., 93.48 FID \\
    Brown Headed Cowbird & 1.0 & 0.5 & 5.0 & 2.0 & $[0.01,157.4]$ & \texttt{input\_blocks\_15} & 1 & 3.9\% acc., 65.72 FID \\
    \bottomrule
\end{tabular}}
\end{table}

\subsection{Stable Diffusion~1.5 Depth-of-Field Implementation}

\begin{table}[H]
\centering
\caption{\textbf{Stable Diffusion~1.5 depth-of-field implementation details.}}
\label{tab:sd_dof_impl}
\small

\begin{tabular}{@{}lp{0.68\linewidth}@{}}
    \toprule
    \textbf{Parameter} & \textbf{Value} \\
    \midrule
    \multicolumn{2}{@{}l}{\textit{Diffusion Model}} \\
    Model & \texttt{stable-diffusion-v1-5/stable-diffusion-v1-5} \\
    Task & Shallow depth-of-field steering for text-conditioned generation \\
    Sampler & DDIM \\
    DDIM sampling steps & 50 \\
    Text CFG & 7.5 \\
    \midrule
    \multicolumn{2}{@{}l}{\textit{Direction Data}} \\
    Dataset & \texttt{svnfs/depth-of-field}, 1,200 images ($\sim$600 per class) \\
    Labels & Binary shallow/deep DoF labels; target direction is shallow DoF \\
    Evaluation prompts & First 25 COCO-Karpathy validation captions \\
    Seeds per prompt & 42, 123, 256, 789 \\
    \midrule
    \multicolumn{2}{@{}l}{\textit{Activation Collection}} \\
    RFM steering layer & \texttt{unet.down\_blocks[2].resnets[-1]} \\
    Direction extraction noise level & $\sigma \approx 0.344$ \\
    Feature representation & Flattened activations, dimension 327,680 \\
    \midrule
    \multicolumn{2}{@{}l}{\textit{RFM Training}} \\
    Kernel & Laplace \\
    Bandwidth & 5000 \\
    Iterations & 5 \\
    \midrule
    \multicolumn{2}{@{}l}{\textit{Guidance Settings}} \\
    Noise-alignment coefficient $\lambda$ & 0 \\
    RFM coefficient $w_{\mathrm{RFM}}$ & 0.7 \\
    RFM amplification scale $s$ & 4.0 \\
    RFM window $\sigma_t \in [\sigma_{\mathrm{R}}^{\mathrm{lo}},\sigma_{\mathrm{R}}^{\mathrm{hi}}]$ & $[0.041,13.12]$ \\
    \midrule
    \multicolumn{2}{@{}l}{\textit{Evaluation}} \\
    Metric & Depth Anything V2 foreground/background split + Laplacian variance \\
    Reported aggregate & BG sharpness $1085.03 \rightarrow 785.89$; FG/BG $2.247 \rightarrow 2.585$ \\
    \bottomrule
\end{tabular}
\end{table}

\end{document}